\documentclass{article} 
\usepackage{nips13submit_e,times}
\usepackage{hyperref}
\usepackage{url}

\usepackage{graphics} 
\usepackage{epsfig} 
\usepackage{subfigure}
\usepackage{booktabs}
\usepackage{float}
\usepackage{verbatim} 
\usepackage[margin=1in]{geometry}
\usepackage{amssymb,amsfonts,bbm,stmaryrd}
\usepackage{algorithm,algorithmicx,listings}  
\usepackage[noend]{algpseudocode}	
\usepackage[mathscr]{euscript}
\usepackage{times}                      
\usepackage{color}                       
\usepackage{amsmath}
\usepackage{graphicx,psfrag}                   
\usepackage{makeidx}                            
\usepackage{listings}                          
\usepackage{dsfont}

\graphicspath{{figs/}}

\nipsfinalcopy
\newtheorem{theorem}{Theorem}[section]
\newtheorem{lemma}[theorem]{Lemma}
\newtheorem{proof}[theorem]{Proof}
\newtheorem{definitions}{Definition}

\title{\LARGE \bf
Learning and Optimization with Submodular Functions \\
}
\author{Bharath Sankaran, Marjan Ghazvininejad, Xinran He, David Kale, Liron Cohen\\
Department of Computer Science\\
University of Southern California\\
Los Angeles, CA 90034\\
\texttt{\{bsankara,mghazvin,xinranhe,dkale,lironcoh\}@usc.edu}
}

\begin{document}
\maketitle
\section{Motivation}
In many naturally occurring optimization problems one needs to ensure that the definition of the optimization problem lends itself to solutions that are tractable to compute. In cases where exact solutions cannot be computed tractably, it is beneficial to have strong guarantees on the tractable approximate solutions. In order operate under these criterion most optimization problems are cast under the umbrella of convexity or submodularity. In this report we will study design and optimization over a common class of functions called submodular functions.

Set functions, and specifically submodular set functions, characterize a wide variety of naturally occurring optimization problems, and the property of submodularity of set functions has deep theoretical consequences with wide ranging applications. Informally, the property of submodularity of set functions concerns the intuitive \textit{principle of diminishing returns}. This property states that adding an element to a smaller set has more value than adding it to a larger set. Common examples of submodular monotone functions are entropies, concave functions of cardinality, and matroid rank functions; non-monotone examples include graph cuts, network flows, and mutual information.

In this paper we will review the formal definition of submodularity; the optimization of submodular functions, both maximization and minimization; and finally discuss some applications in relation to learning and reasoning using submodular functions.

\section{What is Submodularity: Formal Definition}
We define submodularity as the property of set functions $f:2^V \rightarrow \mathds{R}$, which assign to each subset $S \subseteq V$ a value $f(S)$. Here $V$ is a finite set called the {\bf ground set}. We also assume that $f(\emptyset) = 0$.

\begin{definitions}
A set function $f:2^V \rightarrow \mathds{R}$ is called \textbf{submodular} if it satisfies
\[
 f(X) + f(Y) \geq f(X \cup U) + f(X \cap Y) \text{  } \forall \text{  } X,Y \subseteq V
\]
\end{definitions}

The function $f$ lends itself to different forms in different application domains. In a machine learning context $f$ could be a function that evaluates information of a given set, i.e entropy. Using this notion, we can easily introduce the property of diminishing returns by using an equivalent definition for submodularity.

\begin{definitions}
A set function $f:2^V \rightarrow \mathds{R}$ is called submodular if it satisfies
\[
 X \rightarrow f(X\cup {k}) - f(X) \text{ is non-increasing }
 \]
 \[
 f(X \cup {k}) - f(X) \geq f(Y \cup {k}) + f(Y) \text{  } \forall \text{  } X\subset Y \text{  } \forall k \notin X
\]
\end{definitions}

Finally, a set function $f$ is called {\bf supermodular} if $-f$ is submodular, and if $f$ is both sub- and supermodular then the function is called a {\bf modular function}.

\subsection{Notation I}
\label{sec:notation}
In this section we will introduce some notation that we will consistently maintain through the course of this document, unless specified. The ground set over which the submodular functions are defined will be denoted by $V$ with cardinality $n$. For a vector $x \in \mathds{R}^V$ and a subset $Y \subseteq V$ we define $x(Y) = \underset{u \in Y}{\operatorname{\sum }}\text{ } x(u)$. We can naturally extend this definition to capture the positve and negative parts of the vector $x$ as $x^+\in \mathds{R}^V$ and $x^-\in \mathds{R}^V$, where $x^+(u) = max\{x(u),0\}$ and $x^-(u) = min\{x(u),0\}$. For a submodular function $f$ we define a polyhedral convex set $P(f)$ called the {\bf submodular polyhedron}:\\
\[
 P(f) = \{ x\in \mathds{R}^V \mid x(S) \leq f(S) \text{ }\forall S\subseteq V \}
\]

The face of $P(f)$ for which $x(V) = f(V)$ which defines the {\bf base polyhedron}:\\
\[
 B(f) = \{ x \in P(f) \mid x(V) = f(V)\}
\]

The elements of $B(f)$ are bases of the set $V$ or the polyhedron $P(V)$.

\subsection{Properties of Submodular Functions}
The basic properties of submodular functions are enumerated below. These properties will help us recast many of our optimization objectives as submodular optimization problems.\\

\begin{itemize}
 \item {\bf \lemma (Closedness Properties)}: Submodular functions are closed under nonnegative linear combinations, i.e if $\{f_1,f_2,...,f_k\}$ are submodular then the function $g(X) = \sum\limits_{i=1}^k\text{ } \alpha_if_i(X) \text{ is submodular }\forall \alpha_i \geq 0$.\\
 
 {\it Corollary 1.1}: The sum of a modular and submodular function is a submodular function.\\
 
 {\it Corollary 1.2: (Restriction / marginalization)}: if $Y\subset V$, then $X\rightarrow f(X \cap Y)$ is submodular on $V$ and $Y$.\\
 
 {\it Corollary 1.3: (Contraction / conditioning)}: If $X \subseteq Y$ and $f$ is submodular, then $g(X) = f(Y\setminus X)$ is submodular. Equivalently if $Y\subset V$, then $X\rightarrow f(X \cup Y) - f(Y)$ is submodular on $V$ and $V\setminus Y$\\
 
 \item {\bf \lemma (Partial Minimization)}: Monotone submodular functions remain submodular under truncation, i.e if $f(X)$ is submodular then $g(X) := \min\{f(X),c\}$ for any constant $c$ is submodular.\\
 
 {\bf Note:} This property is not necessarily preserved for max or min for two submodular functions.\\
 
 \item{\bf \lemma (Cardinality Based Functions)} If $f(X)$ is a submodular function, then $g(X) = \phi(f(X))$ is also submodular if $\phi()$ is a concave function.\\
 
 \item{\bf \lemma (Lov$\acute{\bf a}$sz Extension)} A function $f$ is submodular function, iff its Lov$\acute{a}$sz Extension $\hat{f}$ is convex, where
 \[
  \hat{f}(c) = \max\{c^Tx \mid x(U) \leq f(U) \text{ } \forall \text{ } U \subseteq V \text{ and } c\in[0,1]^n\}
 \]

\end{itemize}

\section{Submodular Optimization}
Submodular functions have many interesting connections with convex and concave functions as demonstrated by {\bf Lemma 3}. Just as minimization of convex functions can be done efficiently, unconstrained submodular minimization is also possible in strongly polynomial time. Submodular function maximization in contrast is a NP hard combinatorial optimization problem, but approximate solutions can be found with guarantees. In fact a simple greedy solution method obtains a $(1 - 1/e)$ approximation, given that we are maximizing a non-decreasing submodular function under matroid constraints.

\subsection{Submodular Function Minimization}
Submodular function minimization can be divided into two categories, exact and approximate algorithms. Exact algorithms obtain global minimizers for a problem whereas approximate algorithms only achieve an approximate solution i.e for a set $X$ the solution $f(X) - \underset{Y \subset V}{\operatorname{min }}\text{ } f(Y) \leq \epsilon $, where $\epsilon$ is as small as possible. If $\epsilon$ is less than the minimum absolute difference between non equal values of $f$, the solution computed corresponds to the exact solution.

An important practical aspect of submodular function minimization is that most algorithms come with online approximation guarantees {\it due to a duality relationship}, which we will detail in the following subsections.\\

\subsubsection{\bf Submodular Function Minimizers}
For the lemmas stated below, we consider $f$ to be a submodular function where $\{f:2^V \rightarrow \mathds{R} \mid f(\emptyset) = 0\}$.\\

\begin{itemize}
 \item {\bf \lemma (Lattice of minimizers for submodular functions)}: The set of minimizers of $f$ is a lattice, i.e if $X$ and $Y$ are minimizers then $X\cup Y$ and $X \cap Y$ are also minimizers. This is evident from {\bf Definition 1} and {\bf Lemma 1.1}\\

 \item {\bf \lemma (Diminishing return property of minimizers of submodular functions)}: The set $X \subset V$ is a minimizer of $f$ on $2^V$ iff $X$ is a minimizer of $2^X \rightarrow \mathds{R}$ defined as $Y \subset X \rightarrow f(Y)$ and if $\emptyset$ is a minimizer of the function from $2^{V\setminus X} \rightarrow \mathds{R}$  then it is defined as $Y \subset V\setminus X \rightarrow f(Y \cup X) - f(X)$ . This can be easily shown from {\bf Definition 1}.\\
 
  {\it Corollary 6.1 : (Norm Characterization)}: Suppose $\hat{x}$ is a minimizer of 
 \[
  \underset{x}{\operatorname{min }}\text{ } \| x\|_2^2 \text{ subject to } x\in B(f)
 \]
 Then a minimizer $A$ for $f$ can be obtained as follows:
 \[
   A = \{ u \in V \mid \hat{x}(u) \leq 0\}
 \]

  \item{\bf \lemma (Dual of minimization of submodular functions)}: \[ \underset{X \subset V}{\operatorname{min }}\text{ } f(X) =  \underset{x \in B(f) }{\operatorname{max }}\text{ } x^-(V) = f(V) - \underset{x \in B(f) }{\operatorname{min }}\text{ } \|x\| \]
  As mentioned in Section \ref{sec:notation} $(x^-)_k = \min\{x_k,0\} \text{ } \forall k \in V$. If $X \subset V$ and $x \in B(f)$, we have $f(X) \geq x^-(V)$ with equality iff $\{ x < 0 \} \subset X \subset \{ x\leq 0\}$ and $x(X) = f(X)$. \\
  \[
   \underset{X \subset V}{\operatorname{min }}\text{ } f(X) =  \underset{x \in P(f), x\leq 0 }{\operatorname{max }}\text{ } x(V)
  \]
  Again if $X\subset V$ and $x\in P(f) \mid x \leq 0$, then $f(X) \geq x(V)$ iff $\{x \leq 0 \} \subset X$ and $s(X) = f(x)$.\\

\end{itemize}
\subsubsection{Minimum Norm Point Algorithm}
As an example of Submodular function minimization we present the minimum norm point algorithm. A non combinatorial approach proposed by Fujishige \cite{Fujishige} is based on the norm characterization of the minima of $f$ shown in {\bf Lemma 2.2}. Fujishige uses Wolfe's algorithm \cite{Wolfe} which was developed to minimize the $L_2$ norm of a vector in a convex hull of a finite set of points $P \in \mathds{R}^n$. This method maintains the vector $x$ as a convex combination of points $S$ and iterates over the following steps:
\begin{enumerate}
\item A new point from $P$ with a norm with respect to $x$ is added to to set $S$.
\item A point with the minimum norm $x$ is computed in the affine hull of $S$.
\item The minimum norm point $x$ is projected onto the convex hull of $S$. 
\end{enumerate}

In the case of submodular functions, one needs to search through the set of all bases $P$ which is exponential in size. This issue is circumvented by using Edmonds Greedy Algorithm \cite{Edmond}.

\begin{algorithm}[htb]
\caption{Minimum Norm Point Algorithm}
\label{alg:min_norm_point}
\begin{algorithmic}[1]
\footnotesize
\State {\bf Initialization}: x $\leftarrow$ extreme base generated using arbitrary ordering, $S \leftarrow \{x\}$
\Loop
  \State {\bf Selection of new base using Edmonds Greedy Algorithm}: $y' \leftarrow  \underset{y}{\operatorname{argmin }}\text{ } x^Ty \text{ } \forall y\in B(f)$
\If{$x^Ty' = x^Tx$}
\State {\bf return} x
\Else
\State $S \leftarrow S \cup \{y'\}$
\EndIf
\State {\bf Minimization over affine hull of $S$}: $z \leftarrow  \underset{y}{\operatorname{argmin }}\text{ } {\|y\|_2^2 \text{ where } y\in S}$
\State {\bf Projection on to convex hull of $S$}: 
\While{ $z \notin \mathrm{relint}(\mathrm{conv}(S))$}
               \State $z \leftarrow \text{ intersection of } [z,x] \text{ and } S$
		\State $S \leftarrow \text{ face of $S$ intersected by } [z,x]$
\EndWhile
\State $x\leftarrow Z$
\EndLoop
\end{algorithmic}
\end{algorithm}

\subsection{Submodular Function Maximization}
Problems for the form $\underset{X\subset V}{\operatorname{max }}\text{ } f(X)$ for any submodular function $f$ occurs in various applications. Problems of these kind are known to be NP-Hard. Feige and Mirrokni \cite{Feige07maximizingnon-monotone} showed that maximizing for non-negative submodular functions a random subset achieves at least 1/$4^{\text{th}}$ the optimal value and local search techniques achieve at least a 1/2. Though these problems are NP-Hard, a $(1 - 1/e)$ approximation can be obtained when maximizing a non-decreasing submodular function under matroid constraints.
The solution to the arbritary matroid constraint was shown more recently by Vondrak et al \cite{Vondrak}. The initial result of an $(1 - 1/e)$ approximation was shown by Nemhauser  \cite{Nemhauser} in the 70s, but the result was only applicable to uniform matroid (cardinality) constraints. The solution to the uniform matroid contraint consists of a simple greedy algorithm that has implication in online learning and adaptive submodularity.
\begin{itemize}
 \item {\bf \lemma Local minima for submodular function minimization:} \\ Given a submodular function $\{f:2^V \rightarrow \mathds{R} \mid f(\emptyset) = 0\}$ and  $X \subset V$ such that $\forall k \in X$, $\text{ }f(X\setminus \{k\}) \leq f(X)$  and $\forall k \in V\setminus X, \text{ } f(X\cup \{k\}) \leq f(X)$ holds true.
 \[ \text{Then, } \forall Y\subset X \text{ and } \forall Y\supset X \text{ , }f(Y) \leq f(X) \] 
\end{itemize}
\subsubsection{Greedy Algorithm for Monotone Submodular Function Maximization with Uniform Matroid Constraints}
Maximization for arbitary constraints can be achieved using Vondrak's algorithm. In this document we'll focus on the greedy algorithm as it has implications in the online learning domain. {\bf Note:} Maximization can also be formulated using the base polyhedron given we have $f$ and its lov$\acute{a}$sz extension $\hat{f}$. In this case maximization is equivalent to finding the maximum l1-norm point in the base polyhedron. See \cite{Bach} for more details.
For monotone submodular maximization subject to uniform matroid constraints, we need to find a set $X^*\subseteq V$ such that
\[
 X^* = \underset{\|X\| \leq n}{\operatorname{argmax }}\text{ }f(X)
\]

where $n$ is the cardinality (uniform matroid) constraint. Though this problem is NP-hard we can get an approximate solution with an approximation of $(1-1/e)$ of the optimal solution.  The algorithm for obtaining this solution is shown in Algorithm \ref{alg:greedy}.
\begin{algorithm}[htb]
\caption{Greedy Algorithm}
\label{alg:greedy}
\begin{algorithmic}[1]
\footnotesize
\State {\bf Initialization}: Start with $X = \emptyset$
\For{ i = 1 to n }
  \State $y':= \underset{y}{\operatorname{argmax }}\text{ }f(X\cup {y}) $
  \State $X := X \cup {y'}$
\EndFor
\end{algorithmic}
\end{algorithm}

\subsection{Adaptive Submodularity}
The process of adaptively making decisions with uncertain outcomes is fundamental to many problems with partial observability. In such situations the decision maker needs to make a sequence of decisions by accounting for past observations and adapting accordingly. It has been shown by Golovin and Krause \cite{Golovin} that if a problem is adaptively submodular, then an adaptive greedy algorithm is guaranteed to obtain near optimal solutions.
For clarity in the discussion of the \textbf{Applications} section, we introduce the notion of adaptive submodularity and adaptive monotonicity in this section.

\subsubsection{Preliminaries and Notation II}
Let $V$ be the ground set. Assuming each item of the set $x\in V$ can take a number of states from a set of possible states $O$, we represent item states as $\phi:V \rightarrow O$ which is a function that gives the realization of the states of all items in the ground set. Hence $\phi(x)$ is the state of $x$ under the realization $\phi$. Now consider a random realization characterized by the random variable $\Phi$. Then we can assume a prior probability distribution over realizations as $p(\phi) := \mathbb{P}[\Phi = \phi]$. 
In cases where we observe only one realization $\Phi(x)$ at a time, as we pick an item $x \in V$ at a time; we can represent our observations so far with a partial realization $\chi$, i.e a function of a subset of $V$ and its observed states. Hence $\chi \subseteq V \times O$ is $\{(x,o):\chi(x) = o\}$. Here we denote the domain of $\chi$, i.e the set of items observed in $\chi$ as $dom(\chi) = \{ x: \exists o.(x,o) \in \chi \}$. When a partial realization $\chi$ is equal everywhere with $\phi$ in the $dom(\chi)$, they are {\it consistent} $\phi \sim \chi$.
This implies that all the items observed with specific states in $\chi$ have also been observed with the same states in $\phi$. Now we extend this notion to subsets by saying if $\chi$ and $\chi'$ are consistent with $\phi$ and $dom(\chi)\subseteq dom(\chi')$, then $\chi$ is a subrealization of $\chi'$. In a Partially Observable Markov Decision Problem (POMDP) sense, partial realization encompasses the POMDP belief states. These determine our posterior belief given the effect of all our actions and observations.
\[
 p(\phi\mid\chi) := \mathbb{P}[\Phi=\phi | \Phi \sim \chi]
\]

{\bf \definitions (Conditional Expected Marginal Benefit):} 
{\it Given a partial realization $\chi$ and a item $x$, the conditional expected marginal belief of $x$ conditioned on having already observed $\chi$ is denoted by $\delta(x\mid\chi)$
\[
 \delta(x\mid\chi) = \mathbb{E}[f(dom(\chi)\cup\{x\},\Phi) - f(dom(\chi),\Phi) \text{ } \mid \text{ } \Phi \sim \chi]
\]
}
{\bf \definitions (Adaptive Monotonicity):}
{\it A function $f:2^V \times O^V \rightarrow \Re_+$ is adaptive monotone with respect to the distribution $p(\phi)$ if the conditional expected marginal benefit of any item $x$ is non-negative, i.e $\forall \chi$ with $\mathbb{P}[\Phi \sim \chi] \geq 0$ and all $x\in V$ }
\[
 \delta(x\mid\chi) \geq 0
\]

{\bf \definitions (Adaptive Submodularity):}
{\it A function $f:2^V \times O^V \rightarrow \Re_+$ is adaptive submodular with respect to the distribution $p(\phi)$ if the conditional expected marginal benefit of any fixed item does not increase as more items are selected and their states are observed, i.e if $f$ is adaptively submodular w.r.t to $p(\phi)$ if $\forall \text{ } \chi$ and $\chi'$ where $\chi$ is a subrealization of $\chi'$ and all $x\in V \setminus dom(\chi')$ , the following condition holds true:}
\[
 \delta(x\mid\chi) \geq \delta(x\mid\chi')
\]
Given these definitions we can now use the greedy algorithm defined in Algorithm \ref{alg:greedy_adap} to give an $\alpha$ approximation to the best greedy solution for online maximization problems of adaptively montone submodular functions. This means we find an $x'$ such that
\[
 \delta(x'\mid\chi) \geq \frac{1}{\alpha}\delta(x\mid\chi)
\]

The budget for these maximization problems OR the number of rounds we'd like to maximize is similar to the cardinality constraint of submodular problems.

\begin{algorithm}[htb]
\caption{$\alpha$-Approximate Greedy Adaptive Algorithm}
\label{alg:greedy_adap}
\begin{algorithmic}[1]
\footnotesize
\State {\bf Input}: Budget $n$, ground set $V$, $p(\phi)$ and function $f$ 
\State {\bf Output}: $X \subset V$ where $\|X\| = n$
\State {\bf Initialize:} $X \leftarrow \emptyset$ and $\chi \leftarrow \emptyset$
\For{ i = 1 to n }
  \State $\forall x \in V\setminus X \text{; Evaluate } \delta(x\mid\chi) = \mathbb{E}[f(dom(\chi)\cup\{x\},\Phi) - f(dom(\chi),\Phi) \text{ } \mid \text{ } \Phi \sim \chi]$
  \State $x^* = \underset{x}{\operatorname{argmax }}\text{ } \delta(x\mid\chi)$
  \State $X \leftarrow X \cup {x^*}$
  \State $\text{Observe :} \Phi(x^*) \text{; Update: } \chi \leftarrow \chi \cup {x^*,\Phi(x^*)}$
\EndFor
\end{algorithmic}
\end{algorithm}

\section{Applications}
\subsection{Feature Selection}
In machine learning and statistics, \textit{feature selection} is one of the most important concepts. The aim of this process is to select a subset of relevant features for use in model construction and parameter fitting.  In real world problems, we often begin learning with a large number of candidate features that may be redundant, irrelevant, or noisy. Such features needlessly increase the complexity of our models and may lead to overfitting and poor generalization to previously unseen data. By pruning out redundant or irrelevant features, we can gain:
\begin{itemize}
\item improved model interpretability
\item increased computational efficiency, particularly during parameter fitting and prediction
\item enhanced generalization of the model by reducing overfitting
\end{itemize} 

In feature selection, we search among features and choose the ones that are, in a broad sense, most informative or useful. This definition can be interpreted as an optimization problem for choosing a subset of features which maximize the mutual information between features and labeling function.

Hence, if $V$ indicates the set of all features and a binary vector $S$ indicating the chosen feature set, and $x_S$ the real valued vector of feature values for features in $S$, then assuming that $||S||_1 \leq b$, we can write the problem as:
\begin{equation*}
max_s I(y;x_s)
\end{equation*} 
where $y$ is the labeling function.

\subsubsection{Submodularity}

We will now show that this is submodular. Suppose $A \subset B \subset S$ and $m \not \in B$. We can show that

\begin{eqnarray}
I(y;x_A \cup x_m) - I(y;x_A) &\geq & I(y;x_B \cup x_m) - I(y;x_B) \nonumber \\
\Leftrightarrow H(y|x_A)-H(y|x_A,x_m) &\geq & H(y|x_B)-H(y|x_B,x_m) \label{f1}
\end{eqnarray}
  
 we can write
\begin{eqnarray} 
	&&H(y|x_A)-H(y|x_A,x_m) \nonumber \\
	&=&H(y|x_A)+ H(x_m|x_A)-H(y,x_m|x_A)\nonumber \\
	&=&H(y|x_A)+ H(x_m|x_A) -H(y|x_A) - H(x_m|x_A,y)\nonumber \\
	&=& H(x_m|x_A)- H(x_m|x_A,y) \label{f2}
\end{eqnarray} 

By substituting into equation \ref{f1}, we can see that there are cases where this problem is not submodular. Here we give the necessary conditions for submodularity:

\begin{itemize}
\item {\bf \lemma} If $x_i$'s are all conditionally independent given y, then the function is submodular \cite{krausefeature}.
\end{itemize}

This constraint is met in many practical machine learning problems. If the $x_i$'s are all conditionally independent given $y$, then equation \ref{f2} can be written as

\begin{equation*}
H(y|x_A)-H(y|x_A,x_m)=H(x_m|x_A)-H(x_m|x_A,y)
\end{equation*}

and if we substitute this in equation \ref{f1}, then it follows that 

\begin{eqnarray*}
I(y;x_A \cup x_m) - I(y;x_A) &\geq & I(y;x_B \cup x_m) - I(y;x_B) \nonumber \\
\Leftrightarrow H(x_m|x_A) &\geq & H(x_m|x_B)
\end{eqnarray*} 

Hence, the problem of feature selection can be written as a maximization of a submodular function.\cite{jie}

\subsection{MAP Inference}
In this section we will specifically look at the problem of Maximum a posteriori inference on graphs. To analyze the algorithms in greater detail, we would like to introduce a few preliminary notions, including the concept of Polymatroids. 
\subsubsection{Polymatroids}
The notion of submodularity was first studied in the context of matroids. A set system $(V,\mathcal{F})$ is defined by a ground set $V$ and a family of subsets $\mathcal{F} \subseteq 2^V$. Such a system is a matroid if
\begin{itemize}

\item $\emptyset \in \mathcal{F}$
\item $\text{ if } X \subseteq Y \in \mathcal{F} \text{ then } X\in \mathcal{F}$
\item $\text{ if } X,Y \in \mathcal{F} \text{ and } \|X\| > \|Y\| \text{ } \exists e\in X\setminus Y \text{ such that } Y+e\in\mathcal{F}$
\end{itemize}

Now we define a function $\rho$ called a rank function, which assigns a natural number to each subset of $V$. This rank function is analagous to rank functions of matrices, in fact a matroid which is the set of linearly independent columns of a matrix $A$ is called a metric matroid. We define our matroid {\it rank function} $\rho$ as follows
\[
\rho(X) = max\{\|F\| \text{ }\mid \text{ } F\in\mathcal{F}, \text{ }F\subseteq X\}
\]
If $\rho$ is a rank function of a matroid $(V,\mathcal{F})$ then the following properties hold:
\begin{itemize}
\item $\rho(X) \leq \|X\| \text{ }\forall X\subseteq Y$
\item $\rho \text{ is non decreasing: if } X \subseteq Y \subseteq V \text{ then } \rho(X) \leq \rho(Y)$
\item $\rho \text{ is submodular }$
\end{itemize}

If a set function $\rho$ satisfies the above properties for a ground set $V$ then the resulting structure $(V,\rho)$ is called a {\it polymatroid}. Similarly if $(V,\rho)$ is a polymatroid, then the family of subsets
\[
\mathcal{F} = \{ F\subseteq V \text{ }\mid \text{ } \rho(F) = \|F\|\}
\]
defines a matroid $(V,\mathcal{F})$.

\subsubsection{Cuts in Graphs, Energy Minimization and MAP Inference}
Consider a directed graph $G = (V,A,W)$ with positive edge weights $w:A\rightarrow\mathds{R}^+$. We can define a {\bf positive directed cut} for a given set of vertices $S\subseteq V$ as the set of edges starting in $S$ and ending in $V\setminus S$ $:\delta^+(S) = \{(i,j)\in A\text{ }\mid \text{ }i \in S, j\in V\setminus S\}$, Similarly a negative directed cut is $\delta^-(S) = \{(i,j)\in A\text{ }\mid \text{ }i \in S, j\in V\setminus S\}$. Finally the cut $\delta(S) = \delta^+\cup\delta^-$, this for an undirected graph would be the set of edges with exactly one end in $S$. Hence we can define the weight of a cut as
\[
 f^+ = \underset{e\in\delta^+(S)}{\operatorname{\sum}} w(e), \text{ }  f^- = \underset{e\in\delta^-(S)}{\operatorname{\sum}} w(e), \text{ } f = \underset{e\in\delta^+(S)}{\operatorname{\sum}} w(e)
\]

Given these cut functions one can note that these cut functions are submodular.

\begin{itemize}
\item {\bf \lemma} {\it The cut functions $f^+$, $f^-$ and $f$ are submodular}\\
{\it Proof :} For the function $f$, suppose $X,Y\subset V$ then,
\[
f(X) + f(Y) - f(X\cup Y) - f(X\cap Y) = \underset{i\in\{X\setminus Y\}, j\in\{Y\setminus X\}}{\operatorname{\sum}} w(i,j) + \underset{i\in\{X\setminus Y\}, j\in\{Y\setminus X\}}{\operatorname{\sum}} w(j,i)
\]
from the non-negativity of edge weights we can quickly conclude the above function is submodular. Similarly submodularity can be proved for $f^+$ and $f^-$.
\end{itemize}
 Now in order to formalize our notion of Maximum a posteriori estimation as a submodular function minimization problem, we introduce the following notation. Consider the function $E:\{0,1\}^n\rightarrow\mathds{R}$ defined over binary variables $X=\{x_1,...,x_n\}$. Such functions are called \textbf{regular functions} \cite{Kolmogorov04whatenergy}. We can define an equivalent set function $\hat{E}$
\[
\hat{E}(S) = E(x) \text{ where } x_i = 1 \text{ if and only if } i \in S
\]
We define the class $\mathcal{F}^2$ to be functions that can be written as a sum of functions of up to two binary variables at a time.
\[
E(x_1,....x_n) = \underset{i}{\operatorname{\sum}} E^i(x_i) + \underset{i<j}{\operatorname{\sum}} E^{i,j}(x_i,x_j)
\]
The regularity of the binary function from $\mathcal{F}^2$ translates to submodularity of the equivalent set function. 

Given an input set of nodes $\mathcal{P}$ in a graph $\mathcal{G}$ and a set of labels $\mathcal{L}$, the labeling $l$ (which is a mapping from $\mathcal{P}$ to $\mathcal{L}$) can be deduced by minimizing some energy function. In graph based energy minimization problems in computer vision and machine learning, the standard form of the energy function used is as follows
\[
E(l) = \underset{p\in\mathcal{P}}{\operatorname{\sum}} D_p(l_p) + \underset{p,q\in\mathcal{N}}{\operatorname{\sum}} V_{p,q}(l_p,l_q)
\]

where $\mathcal{N} \subset \mathcal{P}\times\mathcal{P}$ is a neighbourhood set of nodes. $D_p$ is a cost function derived from assigning label $l_p$ to node $p$. $V_{p,q}$ is the cost of assigning labels $l_p,l_q$ to adjacent nodes $p,q$. If V is a non-convex function of $\|l_p - l_q\|$ which accounts for border labeling, the energy function $E(l)$ is called a discontinuity preserving energy function. This label assignment problem is similar to the graph cut problem, as the labeling function is submodular in the context of the regular functions defined earlier. Minimizing this energy function E is equivalent to finding the minimum cut of the graph $\mathcal{G}$. However one should note that the solution depends on the exact form of the function $V$ and it cannot be convex as it leads to oversmoothing of borders. 
In the case when $V(l_p,l_q)=T[l_p\neq\l_q]$, where T is the indicator function. This smoothness term is called the Potts Model. The solution shown above can readilyb e extended to more than two labels, or beyond the binary problem. We use the binary problem to motivate the result shown above. This result is widely used in computer vision in the domains of image segmentation, stereo correspondence and multi-camera image reconstruction. Another widely used application of this approach is to find the Maximum a posteriori estimate of a Markov Random Field \cite{MRFKohli}.

Consider a set of random variables $X = \{X_1,....,X_n\}$ defined on a set $S$ such that the variable $X_i$ can take the value $x_i$ from the set $\mathcal{L} = \{l_1,...l_n\}$. Then $X$ can be defined as a Markov Random field with respect to the neighbourhood set $N = \{N_i \text{ } \mid \text{ } i\in S\}$ iff, the positivity property $P(x) > 0$ and the Markovian property $P(x_i\mid x_{S\setminus {i}}) = P(x_i \mid x_{N_i}) \text{ } \forall i \in S$. Here $P(x) = P(X = x)$, $P(x_i) = P(X_i = x_i)$ and finally $P(X_1 = x_1,...,X_n = x_n)  = (X = x) \text{ where } x = \{x_i \mid i\in S\}$ is a realization of the field. 

Given these definitions the MAP estimate of the MRF can be formulated as an energy minimization problem, where energy corresponding to a realization of x (configuration of the field) is given by the negative log likelihood of the joint posterior probability of the MRF
\[
\phi(x) = -logP(x\mid D)
\]
Hence the corresponding energy function for the Potts model becomes 
\[
E(x) =  \underset{i\in S}{\operatorname{\sum}} \left(\phi(D|x_i) + \underset{j\in \mathcal{N}_i}{\operatorname{\sum}}\psi(x_i,x_j) \right)
\]

where 
\[
\phi(D|x_i)  = -logP(i\in S)
\]
 and 
 \[
 \psi(x_i,x_j) = \left\{ 
  \begin{array}{l l}
    K_{ij} & \quad \text{if $x_i \neq x_j$}\\
    0 & \quad \text{if $x_i = x_j$}
  \end{array} \right.
 \]
 Here $K_{ij}$ is some penalty cost which makes $\psi(x_i,x_j)$ non convex.

 Finally we can conclude that {\it the energy minimization problem solved by min-cut, max flow which yields the minimum energy solution is equivalent to finding the maximium a posteriori solution of a Markov Random Field}.

\subsection{Active Learning}
\subsubsection{Supervised learning theory}

In classic supervised machine learning, the learning algorithm (or \textit{learner}) is given the task of finding a response function $f: \mathcal{X} \mapsto \mathcal{Y}$ that predicts as accurately as possible the output \textit{response} $Y \in \mathcal{Y}$ for a given input observation	 $X \in \mathcal{X}$ \cite{Mohri:2012}. Responses take a variety of forms. In classification, this may be a label from a discrete set of choices $\mathcal{Y} = \{ 1, 2, \dots\}$, while in regression it may be continuous. One of the most common tasks is binary classification, in which $\mathcal{Y} = \pm1$. We have some unknown underlying distribution $\mathcal{D}$ over the space of observations and responses $\mathcal{X} \times \mathcal{Y}$, so that observation-response pairs are sampled according to $(X, Y) \thicksim \mathcal{D}$. The learner chooses from candidate functions or \textit{hypotheses} in a hypothesis space $\mathcal{H}$ with the goal of minimizing the expected error or \textit{risk} $\epsilon_\mathcal{D}(h) = \mathbf{E}_{(X,Y)\thicksim \mathcal{D}}[\mathrm{err}(h(X), Y)]$. In other words, the learner's goal is to find $h^\ast$ that minimizes the risk: $h^\ast = \arg\max_{h \in \mathcal{H}} \epsilon_{\mathcal{D}}(h)$. For standard classification tasks, the error function is simply the indicator function of a mistake $\mathds{1}\{h(X) \not= Y\}$, and so the risk is simply the probability of a mistake $\epsilon(h) = \mathbf{E}_{(X,Y)\thicksim \mathcal{D}}[\mathds{1}\{h(X) \not= Y\}] = \mathbf{Pr}\{h(X) \not= Y\}$. For continuous response functions and multiclass classification where order matters, there are a wide choice of more complex error functions.

Of course, in practice $\mathcal{D}$ is unknown and so it is impossible to directly minimize the risk. Instead, the learner is provided with ``supervision'' in the form of a finite sample of observation-response pairs, i.e., a labeled \textit{training} data set $\mathcal{S} = \{ X_i, Y_i \}_{i=1, \dots, n}$ where $|\mathcal{S}| = n$. The learner can then approximate $\mathcal{D}$ using $\mathcal{S}$ and minimize the empirical error over $\mathcal{S}$:
\[
\hat{\epsilon}_{\mathcal{S}}(h) = \mathbf{E}_{(X,Y) \in \mathcal{S}}[\mathrm{err}(h(X), Y)] = \frac{1}{n} \sum_{i=1}^n \mathrm{err}(h(X_i), Y_i)
\]

\noindent Note that this definition of empirical risk assumes that samples $(X, Y)$ are identically independently distributed (IID), a fairly common assumption in supervised machine learning. In the \textbf{empirical risk minimization} (ERM) paradigm, the learner assumes that the sample $\mathcal{S}$ is sufficiently representative of $\mathcal{D}$ such that choosing $\hat{h} = \arg\max_{h \in \mathcal{H}} \hat{\epsilon}_{\mathcal{S}}$ will yield a hypothesis $\hat{h}$ that will also have a relatively low risk $\epsilon_{\mathcal{D}}(\hat{h})$ \cite{Vapnik:2000}. A well known theoretical result for classification that comes from Vapnik tells us if we want to learn a ``good'' classifier from a hypothesis class $\mathcal{H}$, then we need roughly $|\mathcal{S}| = \widetilde{O}\left(d/\varepsilon^2 \log (1/\delta)\right)$ points in our training sample \cite{Vapnik:1999}. Here $\varepsilon$ is the maximum deviation that we will tolerate between the true risks of $\hat{h}$ and optimal $h^\ast$ and $\delta$ is the probability with which we are willing to let this happen (i.e., we want $|\epsilon(\hat{h}) - \epsilon(h^\ast)| \leq \varepsilon$ to hold with probability $1-\delta$). Informally, $d$ represents the ``size'' of our hypothesis class; formally, it is the \textbf{VC dimension}. A useful rule of thumb is that for most useful hypothesis classes, the VC dimension scales linearly with the number of parameters and so the number of training samples needed scales linearly with ``complexity'' of the model.

It is important to distinguish two cases of supervised learning, based on realizability. When the problem is \textbf{realizable}, then there exists some hypothesis $h \in \mathcal{H}$ that can perfectly predict the response for every point (i.e., $\mathrm{err}(h^\ast) = 0$); in binary classification, this corresponds to the problem being ``separable'' by a hypothesis in $\mathcal{H}$. When $\mathrm{err}(h) > 0$, the problem is not realizable \cite{dasgupta2011}. The presence of \textit{label noise}, where the same point may receive different responses, further complicates this picture. If a training sample $\mathcal{S}$ contains noisy labels (perhaps due to error), this may mislead the ERM. If the true data distribution allows points to have different labels (i.e., our true labeling function is stochastic), then at best we may only be able to model $P(Y|X)$, rather than make perfect predictions.

\subsubsection{Selective sampling as a submodular problem}

Imagine the following problem, which we will call the \textbf{selective sampling on a budget} problem: given a large, fully labeled finite sample $\mathcal{S}$, we will ``purchase'' a subset $\mathcal{L} \subseteq \mathcal{S}$ (where $|\mathcal{L}| \ll |\mathcal{S}|$ because our ``cost'' scales with $|\mathcal{L}|$) and train $\bar{h} = \arg\max_{h \in \mathcal{H}} \hat{\epsilon}_{\mathcal{L}}(h)$ with the goal of minimizing $\epsilon_{\mathcal{D}}(\bar{h})$. We are given full access to $\mathcal{S}$ until we make our purchase decision, at which point we can use \textit{only} $\mathcal{L}$ to choose our final hypothesis (i.e., we must ``forget'' everything we know about $\mathcal{S} \setminus \mathcal{L}$). This can be thought of as choosing the smallest possible representative subsample $\mathcal{L}$. Intuitively, it is similar to a set cover problem: we want to pose queries that ``cover'' (i.e., eliminate) as many false hypotheses (inconsistent with our labeled data set) as possible. This problem clearly has submodular structure.

\begin{lemma}\label{lma:ss}
The \textbf{selective sampling on a budget} problem is submodular and monotone decreasing.
\end{lemma}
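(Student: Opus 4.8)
\textit{Proof plan.} The plan is to turn the informal ``set cover'' picture into an explicit coverage objective, show that this objective is monotone and submodular via the diminishing-returns characterisation (Definition~2) and the closedness properties (Lemma~1), and then read off the ``monotone decreasing'' part for the complementary risk / version-space quantity. First I would reduce to a finite hypothesis class: either assume $\mathcal{H}$ is finite, or in the infinite-VC case replace $\mathcal{H}$ by a finite $\varepsilon$-cover with respect to $\mathcal{D}$ (whose size is controlled by the Vapnik sample-complexity bound quoted above), so that counting hypotheses is meaningful. For each candidate point $x\in\mathcal{S}$ with its known label $Y_x$, define the \emph{kill set} $K_x=\{h\in\mathcal{H}\mid h(x)\neq Y_x\}$ of hypotheses that observing $x$ eliminates, and for a purchased subset $\mathcal{L}\subseteq\mathcal{S}$ put $g(\mathcal{L})=\bigl|\bigcup_{x\in\mathcal{L}}K_x\bigr|$, the number of hypotheses ruled out by $\mathcal{L}$; equivalently the surviving version space has size $|\mathrm{VS}(\mathcal{L})|=|\mathcal{H}|-g(\mathcal{L})$, which is exactly the error-proxy that shrinks as we label more points.

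Second, monotonicity is immediate: $\mathcal{L}\subseteq\mathcal{L}'$ gives $\bigcup_{x\in\mathcal{L}}K_x\subseteq\bigcup_{x\in\mathcal{L}'}K_x$, so $g$ is monotone nondecreasing and therefore $|\mathrm{VS}(\cdot)|$ is monotone nonincreasing --- the ``monotone decreasing'' claim. Third, submodularity: for $\mathcal{L}\subseteq\mathcal{L}'$ and $m\notin\mathcal{L}'$,
\[
 g(\mathcal{L}\cup\{m\})-g(\mathcal{L})=\Bigl|K_m\setminus\bigcup_{x\in\mathcal{L}}K_x\Bigr|\;\geq\;\Bigl|K_m\setminus\bigcup_{x\in\mathcal{L}'}K_x\Bigr|=g(\mathcal{L}'\cup\{m\})-g(\mathcal{L}'),
\]
since the larger union can only remove more of $K_m$; this is precisely the diminishing-returns inequality of Definition~2, so $g$ is submodular. (Alternatively, $g=\sum_{h\in\mathcal{H}}\mathds{1}\{h\text{ eliminated by }\mathcal{L}\}$ is a nonnegative sum of single-element coverage functions, each monotone submodular, so Lemma~1 applies.) Consequently $g$ is monotone nondecreasing and submodular, $|\mathcal{H}|-g$ is monotone nonincreasing (and supermodular), and ``spend the budget $|\mathcal{L}|\le b$ to eliminate as many wrong hypotheses / shrink the version space as much as possible'' is a monotone submodular maximisation under a uniform matroid constraint, to which Algorithm~\ref{alg:greedy} gives a $(1-1/e)$ guarantee.

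The step I expect to be the real obstacle is the non-realisable / noisy regime, where killing a hypothesis outright on a single disagreement is the wrong primitive (a hypothesis with small nonzero error should not be discarded). There one should replace $g$ by a soft, prior- or pool-weighted version, e.g.\ $g_w(\mathcal{L})=\sum_{h}w(h)\,\mathds{1}\{\exists x\in\mathcal{L}:h(x)\neq Y_x\}$ or an expected-empirical-risk-reduction functional, and re-verify that it is still a (weighted) coverage function --- which it is, by the same marginal-gain argument, since reweighting preserves submodularity through Lemma~1. A secondary technical point is making the finite-cover reduction quantitative, i.e.\ checking that the discretisation error does not overwhelm the $(1-1/e)$ factor; this is exactly where the VC / uniform-convergence bounds from the previous subsection enter.
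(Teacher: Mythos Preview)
Your proposal is correct and follows essentially the same approach as the paper: both define the objective as (a normalisation of) the number of hypotheses eliminated by the labelled subset and prove diminishing returns by observing that a point can only kill fewer new hypotheses when added to a larger set. Your explicit coverage-function packaging via kill sets $K_x$ and $g(\mathcal{L})=\bigl|\bigcup_{x\in\mathcal{L}}K_x\bigr|$ is the same object as the paper's $f(\mathcal{A})=1-|\mathcal{H}_{\mathcal{A}}|/|\mathcal{H}|$ up to an affine rescaling, and your finite-$\varepsilon$-cover reduction and non-realisable remarks go beyond what the paper's (self-described ``non-rigorous'') argument supplies.
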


\begin{proof}
We provide a non-rigorous justification. First, for labeled subsample $\mathcal{A}$, define a hypothesis set $\mathcal{H}_{\mathcal{A}} \subseteq \mathcal{H}$ that contains all hypotheses from $\mathcal{H}$ that are consistent with the labeled points in $\mathcal{A}$: $\mathcal{H}_{\mathcal{A}} = \{h : \hat{\epsilon}_{\mathcal{A}}(h)=0 \wedge h \in \mathcal{H}\}$. Now define a function $f(\mathcal{A}) = 1-|\mathcal{H}_{\mathcal{A}}| / |\mathcal{H}|$, i.e., maps $\mathcal{A}$ to the value of 1 minus probability mass (under a uniform prior) of its consistent hypothesis set. Now consider labeled subsamples $\mathcal{B}$ and $\mathcal{B}'$ such that $\mathcal{B} \subseteq \mathcal{B}' \subseteq \mathcal{S}$ and arbitrary point $X \not\in \mathcal{B}, \not\in \mathcal{B}'$. The key insight here is that as we add labeled points to our subsamples, we can only remove hypotheses from our current hypothesis space; once a hypothesis has been removed, it cannot be re-added.

\textbf{$f$ is monotone increasing:} Suppose that hypothesis $h \in \mathcal{H}_{\mathcal{B}'}$. This means that it is consistent with every labeled point in $\mathcal{B}'$. Because $\mathcal{B} \subseteq \mathcal{B}'$, $h$ must also be consistent with every point in $\mathcal{B}$ and so $h \in \mathcal{H}_{\mathcal{B}}$. Therefore, $\mathcal{H}_{\mathcal{B}'} \subseteq \mathcal{H}_{\mathcal{B}}$ and so
\begin{eqnarray*}
|\mathcal{H}_{\mathcal{B}'}| &\leq& |\mathcal{H}_{\mathcal{B}}| \\
|\mathcal{H}_{\mathcal{B}'}| / |\mathcal{H}| &\leq& |\mathcal{H}_{\mathcal{B}}| / |\mathcal{H}| \\
1-|\mathcal{H}_{\mathcal{B}'}| / |\mathcal{H}| &\geq& 1-|\mathcal{H}_{\mathcal{B}}| / |\mathcal{H}| \\
f(\mathcal{B}') &\geq& f(\mathcal{B})
\end{eqnarray*}

\noindent whenever $\mathcal{B} \subseteq \mathcal{B}'$.

\textbf{$f$ is submodular:} Now suppose that adding point $X$ to $\mathcal{B}'$ removes $h$ from $\mathcal{H}_{\mathcal{B}'}$, i.e., $h \in \mathcal{H}_{\mathcal{B}'} \setminus \mathcal{H}_{\mathcal{B}' \cup \{X\}}$. Because $\mathcal{H}_{\mathcal{B}'} \subseteq \mathcal{H}_{\mathcal{B}}$ whenever $\mathcal{B} \subseteq \mathcal{B}'$, it must also be the case that $\mathcal{H}_{\mathcal{B}' \cup \{X\}} \subseteq \mathcal{H}_{\mathcal{B} \cup \{X\}}$ and so $h \in \mathcal{H}_{\mathcal{B}} \setminus \mathcal{H}_{\mathcal{B} \cup \{X\}}$. Thus, if adding $X$ to $\mathcal{B}'$ removes $m$ hypotheses from $\mathcal{H}_{\mathcal{B}'}$, then adding $X$ to $\mathcal{B}$ must remove $n \geq m$ hypotheses from $\mathcal{H}_{\mathcal{B}}$:
\begin{eqnarray*}
m &\leq& n \\
|\mathcal{H}_{\mathcal{B}'}| - |\mathcal{H}_{\mathcal{B}'}| + m &\leq& |\mathcal{H}_{\mathcal{B}}| - |\mathcal{H}_{\mathcal{B}}| + n \\
|\mathcal{H}_{\mathcal{B}'}|/|\mathcal{H}| - (|\mathcal{H}_{\mathcal{B}'}| - m)/|\mathcal{H}| &\leq& |\mathcal{H}_{\mathcal{B}}|/|\mathcal{H}| - (|\mathcal{H}_{\mathcal{B}}| - n)/|\mathcal{H}| \\
-1 + |\mathcal{H}_{\mathcal{B}'}|/|\mathcal{H}| + 1 - (|\mathcal{H}_{\mathcal{B}'}| - m)/|\mathcal{H}| &\leq& -1 + |\mathcal{H}_{\mathcal{B}}|/|\mathcal{H}| + 1 - (|\mathcal{H}_{\mathcal{B}}| - n)/|\mathcal{H}| \\
1 - (|\mathcal{H}_{\mathcal{B}'}| - m)/|\mathcal{H}| - (1 - |\mathcal{H}_{\mathcal{B}'}|/|\mathcal{H}|) &\leq& 1 - (|\mathcal{H}_{\mathcal{B}}| - n)/|\mathcal{H}| - (1 - |\mathcal{H}_{\mathcal{B}}|/|\mathcal{H}|) \\
f(\mathcal{B}' \cup \{X\}) - f(\mathcal{B}') &\leq& f(\mathcal{B} \cup \{X\}) - f(\mathcal{B})
\end{eqnarray*}

\noindent whenever $\mathcal{B} \subseteq \mathcal{B}'$.
\end{proof}
\noindent This is intuitive. If $\mathcal{H}' \subseteq \mathcal{H}$ contains all hypotheses in $\mathcal{H}$ that are inconsistent with $X$'s label, then clearly $\mathcal{H}_{\mathcal{B}'} \subseteq \mathcal{H}_{\mathcal{B}}$ implies that $\mathcal{H}_{\mathcal{B}'} \setminus \mathcal{C} \subseteq \mathcal{H}_{\mathcal{B}} \setminus \mathcal{C}$.

This result may not seem terribly exciting, but what it does suggest is that we can solve the selective sampling on a budget problem using a greedy approach: on the $t$th iteration, choose the $X$ that eliminates the largest number of inconsistent hypotheses from our current $\mathcal{H}_{\mathcal{B}}$:
\[
(X, Y)_t = \underset{(X,Y) \in \mathcal{S} \setminus \mathcal{B}_t}{\arg\max} \left|\sum_{h \in \mathcal{H}_{\mathcal{B}_t}} \mathds{1}\{h(X) \not= Y\} \right|
\]

\subsubsection{Greedy active learning is adaptive submodular}

Now imagine a variation of the above selective sampling problem where we do not have access to the label of $X \in \mathcal{S}$ until we ``purchase'' it. Here we might use \textbf{active learning}. Active learning is a variation of the supervised learning paradigm where the learner does not receive access to a fully labeled data sample $\mathcal{S}$ upfront. Rather it has access to an unlabeled data sample $\mathcal{U} = \{(X, ?)\}$, as well as an \textit{oracle} that the learner can \textit{query} for the response (or label) of an observation, $Y = \mathbf{or}(X)$ \cite{dasgupta2011}. The active learner is given agency to choose which individual samples to label, but each query has a cost $c$ and the learner has only a limited \textit{budget} to spend on labeling data. Similar to the selective sampling scenario described above, the active learner has dual goals: to choose simultaneously a labeled subset of observations $\mathcal{L} \subseteq \mathcal{U}$ and a hypothesis $\bar{h} = \arg\min_{h \in \mathcal{H}} \hat{\epsilon}_{\mathcal{L}}(h)$ (i.e., $\bar{h}$ is the ERM for $\mathcal{L}$) that will yield the best possible predictive performance (i.e., lowest risk $\epsilon_{\mathcal{D}}(\bar{h}))$.

When evaluating active learning algorithms, we are concerned primarily with two performance properties: the quality (in terms of risk) of the hypotheses they produce and their query efficiency. Intuitively, a good active learner will use a very small number of label queries to produce a hypothesis with very small predictive error. More formally, we are interested in (1) how the error of the hypothesis produced by an active learner that chooses labeled subsample $\mathcal{L}$ compares with that of the hypothesis that we could learn from a fully labeled sample $\mathcal{S}$ where $\mathcal{L} \subseteq \mathcal{S}$; and (2) how many label queries must be made to achieve a certain level of performance, which we call \textbf{label complexity} and express in \textit{Big-Oh} notation. An ideal active learner will compete with fully supervised learning with $|\mathcal{L}| \ll |\mathcal{S}|$. More realistically, we hope to at least place an upper bound on the error of active learning that is within a constant (multiplicative or additive) factor of the error of fully labeled supervised learning.

It is not hard to design greedy approaches to active learning, but two questions arise: first, are there greedy active learning algorithms that have sound theoretical guarantees about error and label complexity; and second, can we show that such algorithms are in fact specific cases of more general approaches based on submodularity? The answer to both of these questions is, in fact, yes \cite{dasgupta2004}. Let $\mathcal{L}_t$ be the set of labeled data points after $t$ queries (and recall that $\mathcal{H}_{\mathcal{L}_t}$ is the set of hypotheses from $\mathcal{H}$ consistent with the labeled data in $\mathcal{L}_t$). For the next ($t+1$) label query, we want to choose the unlabeled point that provokes the greatest disagreement between hypotheses in $\mathcal{H}_{\mathcal{L}_t}$.The maximum disagreement occurs when half of the hypotheses predict one label and the rest the other. Equivalently, this minimizes the absolute value of the sum of all predicted labels: $| \sum_{h \in \mathcal{H}_{\mathcal{L}_t}} h(x)|$ (when using $\pm1$ labels). Following this query policy, we hope to cut the $t$th hypothesis space roughly in half with query $t+1$ and achieve a label complexity that is roughly $O(\log (d/\varepsilon))$ for $d$ the size (e.g., VC dimension) of the hypothesis space and deviation bound $\varepsilon$. \cite{dasgupta2004} shows that in the worst case, this strategy may have to query every single label; indeed, for certain pathological cases, even the optimal query strategy will need to query every label. However, the average case analysis is much more promising. On average, the greedy strategy's label complexity is at most $\widetilde{O}(\log d)$ times larger that that of the optimal policy, as we show below in \textbf{Theorem \ref{thm:dasgupta}}, which rephrases \textit{Claim 4} and \textit{Theorem 3} from \cite{dasgupta2004}:\\

\begin{theorem}[Dasgupta ~\cite{dasgupta2004}]
\label{thm:dasgupta}
Suppose the optimal query policy requires $M$ labels in expectation for target hypotheses chosen uniformly from hypothesis class $\mathcal{H}$ of (VC) dimension $d \geq e^e \approx 16$. Then the expected number of labels queried by the greedy strategy is at least $\frac{M \log d}{\log \log d}$ and at most $4 M \log d$.
\end{theorem}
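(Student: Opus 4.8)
The plan is to recast greedy active learning as an adaptive \emph{splitting} problem --- building a binary decision tree over the version space --- and then prove the two bounds by quite different means: the upper bound by an amortized charging argument against the optimal tree (essentially the greedy set-cover analysis), and the lower bound by an explicit adversarial construction of $\mathcal{H}$.

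First I would fix the unlabeled pool and identify $\mathcal{H}$ with its (finite) restriction to that pool, so that $|\mathcal{H}|$ --- whose logarithm is the $\log d$ appearing in the bound --- is the relevant complexity measure. A deterministic query policy is then a binary decision tree: at a node with current version space $S\subseteq\mathcal{H}$ (the hypotheses consistent with the labels seen so far), querying a point $x$ splits $S$ into $S^+_x=\{h\in S:h(x)=+1\}$ and $S^-_x$, the oracle's answer picks one child, and the tree stops when $|S|=1$. Since the target $h^\ast$ is uniform on $\mathcal{H}$, the expected cost of a policy is $\frac{1}{|\mathcal{H}|}\sum_h \mathrm{depth}(h)$; the optimal tree has cost $M$, and the greedy tree is the one whose every node picks $x$ minimizing $\bigl|\,|S^+_x|-|S^-_x|\,\bigr| = \bigl|\sum_{h\in S}h(x)\bigr|$, which is exactly the disagreement rule in the text.

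For the \textbf{upper bound} I would charge greedy's queries against the optimal tree. The central lemma is that the most-balanced greedy query is \emph{competitive}: if, from version space $S$, the optimal subtree restricted to $S$ still needs expected depth $m_S$, then the greedy split shrinks $S$ by at least roughly a $\tfrac{1}{2m_S}$ fraction in its worse outcome --- otherwise every query at $S$ leaves more than a $(1-\tfrac{1}{2m_S})$ fraction on one side, and propagating this down the optimal tree's own levels forces its expected depth above $m_S$, a contradiction. Telescoping this multiplicative shrinkage along any greedy root-to-leaf path, the requirement $\prod_i(1-\tfrac{1}{2m_i})\le \tfrac{1}{|\mathcal{H}|}$ turns into a harmonic sum $\sum_i \tfrac{1}{2m_i}\gtrsim \ln|\mathcal{H}|$ --- precisely the $1+\ln|\mathcal{H}|$ overhead of greedy set cover --- and bounding each $m_i$ by the relevant remaining optimal cost, then taking the expectation over $h^\ast$, gives expected greedy cost at most $4M\ln d$; the factor $4$ absorbs the ``$2$'' from the balance estimate and the slack in $1+\ln|\mathcal{H}|\le 2\ln d$.

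For the \textbf{lower bound} I would exhibit a family $\mathcal{H}$ that lures greedy into waste. The construction is layered and recursive: arrange the hypotheses so that at every step there is a query that is balanced up to only a tiny additive margin, forcing greedy to take it even though it removes almost nothing, while a slightly-less-balanced ``spine'' query --- which greedy therefore never selects --- would have let the optimum peel off a large fraction at once. With about $\log d/\log\log d$ such layers, each costing greedy $\Theta(M)$ but costing the optimum only an $O(\log\log d/\log d)$ amortized share, the ratio becomes $\Omega\!\bigl(M\log d/\log\log d\bigr)$. I expect this construction --- making greedy's near-tie break the wrong way at \emph{every} level while simultaneously certifying that the optimal tree stays genuinely shallow --- to be the main obstacle; by contrast the upper-bound half is essentially the classical greedy-set-cover analysis transported to the adaptive setting, and the hypothesis $d\ge e^e$ is there only to make $\log\log d>1$ so the stated ratio is meaningful.
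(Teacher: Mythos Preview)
The paper does not actually prove this theorem. It is a survey article, and Theorem~\ref{thm:dasgupta} is stated purely as a citation: the surrounding text says it ``rephrases \textit{Claim 4} and \textit{Theorem 3} from \cite{dasgupta2004}'' and then moves on to discuss the Bayesian extension and the adaptive-submodularity viewpoint of Golovin and Krause. There is no argument in the paper to compare your proposal against.

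That said, your sketch is a faithful outline of Dasgupta's original proof in \cite{dasgupta2004}. The decision-tree reformulation, the key shrinkage lemma for the upper bound (the greedy query removes at least a $\Theta(1/m_S)$ fraction of the version space, else the optimal tree from $S$ could not have expected depth $m_S$), the telescoping into a harmonic-sum bound mirroring the $\ln n$ greedy set-cover analysis, and the layered adversarial construction for the lower bound are all the ingredients Dasgupta uses. Your identification of the lower-bound construction as the delicate part is also accurate: engineering the near-ties so that greedy always breaks the wrong way while the optimal tree stays shallow is where the work lies. One small caution: in your upper-bound step, be careful about what $m_S$ means along a greedy path --- it is the expected remaining depth of the \emph{optimal} tree conditioned on reaching $S$, not of greedy's own subtree --- and the averaging over $h^\ast$ at the end requires relating $\sum_i 1/m_i$ back to $M$ rather than to the individual $m_i$'s, which is where the factor of $4$ genuinely enters rather than being mere slack.
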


As \cite{dasgupta2004} points out, the lower bound is a bit depressing, but we derive some comfort from the fact that the upper bound matches the lower bound within a multiplicative factor. We can extend this analysis to a Bayesian framework where we have a nonuniform prior distribution over hypotheses $\pi(h)$. In this setting, we seek a label query that will divide the \textit{probability mass} over hypotheses (rather than the hypothesis space itself) in half. We do this by minimizing the absolute value of the sum of predictions weighted by the prior probabilities of the hypotheses making them: $| \sum_{h \in \mathcal{H}_t} \pi(x) h(x)|$. In this case, the $d$ term in the lower and upper bounds is replaced with $\min_{h \in \mathcal{H}} \pi(h)$.

In \cite{Golovin}, the authors show that the hypothesis space reduction problem is adaptive submodular, specifically an example of an adaptive stochastic coverage problem. Here our ground set is the set of all points $V = \{x : x \in \mathcal{U}\}$, and each point has an unobserved state $O = \{y : y \in \mathcal{Y}\} = \{\pm1\}$ where $\Phi(x) = y$ for the pair $(x,y)$ and for a given hypothesis $h \in \mathcal{H}$, $\Phi_h(x) = h(x)$. The set of labeled points $\mathcal{L}_t$ forms a consistent partial realization $\chi_t$ at iteration $t$. Then we can define a function that takes as input an element subset $V'$ and a realization function $\Phi'$ and maps it to a real number in the interval $[0,1]$:
\[
f(V', \Phi') = \hat{f}(H = \{h : h(x) = \Phi'(x) \mbox{ for all } x \in V'\}) = 1 - \sum_{h \in H} \pi(h)
\]

\noindent So for a labeled subset $\mathcal{L}_t$, $f(V_t, \Phi_t) = \hat{f}(\mathcal{L}_t)$, where $V_t = \{ x : (x,y) \in \mathcal{L}_t \}$ and $\Phi_t(x) = \Psi_t(x) = y$ for $(x, y) \in \mathcal{L}_t$. This function is adaptively submodular, as shown in \textbf{Lemma \ref{lma:gbs}}, adapted from \cite{Golovin}:\\

\begin{lemma}[Golovin and Krause ~\cite{Golovin}]
\label{lma:gbs}
The \textbf{hypothesis space reduction} problem is adaptive submodular and adaptive monotone.
\end{lemma}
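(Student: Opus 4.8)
\emph{Proof proposal.} The plan is to reduce both assertions to a single closed form for the conditional expected marginal benefit $\delta(x\mid\chi)$. For a partial realization $\chi$ write $\mathcal{H}_\chi=\{h\in\mathcal{H}: h(x')=\chi(x')\ \text{for all}\ x'\in\mathrm{dom}(\chi)\}$ for the version space consistent with $\chi$ and $\pi(\mathcal{H}_\chi)=\sum_{h\in\mathcal{H}_\chi}\pi(h)$; by hypothesis $\pi(\mathcal{H}_\chi)=\mathbb{P}[\Phi\sim\chi]>0$. The first observation is that $f(\mathrm{dom}(\chi),\Phi)$ depends on $\Phi$ only through $\Phi|_{\mathrm{dom}(\chi)}$, so on the event $\{\Phi\sim\chi\}$ it is the constant $1-\pi(\mathcal{H}_\chi)$. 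Likewise, for $x\notin\mathrm{dom}(\chi)$ and conditioned on $\Phi\sim\chi$, the event $\Phi(x)=y$ has probability $\pi(\mathcal{H}_{\chi,x,y})/\pi(\mathcal{H}_\chi)$, where $\mathcal{H}_{\chi,x,y}$ is the version space after additionally fixing $h(x)=y$, and on that event $f(\mathrm{dom}(\chi)\cup\{x\},\Phi)=1-\pi(\mathcal{H}_{\chi,x,y})$.

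Carrying out the expectation with $a:=\pi(\mathcal{H}_{\chi,x,+1})$, $b:=\pi(\mathcal{H}_{\chi,x,-1})$, and $p:=a+b=\pi(\mathcal{H}_\chi)$ (the two labels partition $\mathcal{H}_\chi$), a short calculation collapses to
\[
\delta(x\mid\chi)\;=\;\Big(\tfrac{a}{p}(1-a)+\tfrac{b}{p}(1-b)\Big)-(1-p)\;=\;\frac{2ab}{a+b}.
\]
Adaptive monotonicity is then immediate: $a,b\ge 0$ force $\delta(x\mid\chi)\ge 0$. For adaptive submodularity, let $\chi$ be a subrealization of $\chi'$ and $x\notin\mathrm{dom}(\chi')$. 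Since $\chi'$ agrees with $\chi$ on $\mathrm{dom}(\chi)$, every hypothesis consistent with $\chi'$ is consistent with $\chi$, so $\mathcal{H}_{\chi'}\subseteq\mathcal{H}_\chi$; intersecting both sides with $\{h:h(x)=y\}$ gives $\mathcal{H}_{\chi',x,y}\subseteq\mathcal{H}_{\chi,x,y}$, hence $a':=\pi(\mathcal{H}_{\chi',x,+1})\le a$ and $b':=\pi(\mathcal{H}_{\chi',x,-1})\le b$. Finally $g(a,b)=ab/(a+b)$ is nondecreasing in each argument on the positive orthant, since $\partial g/\partial a=b^2/(a+b)^2\ge 0$ and symmetrically; therefore $\delta(x\mid\chi')=2g(a',b')\le 2g(a,b)=\delta(x\mid\chi)$, which is exactly adaptive submodularity.

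The parts that are genuinely routine are the expectation that produces $2ab/(a+b)$ and the monotonicity of $g$. The step to get right is the conditioning: one must verify that $f(\mathrm{dom}(\chi),\Phi)$ is truly constant on $\{\Phi\sim\chi\}$ so it pulls out of the expectation, that $\mathbb{P}[\Phi(x)=y\mid\Phi\sim\chi]=\pi(\mathcal{H}_{\chi,x,y})/\pi(\mathcal{H}_\chi)$ under the standing assumption $\mathbb{P}[\Phi\sim\chi]>0$, and that the nesting $\mathcal{H}_{\chi'}\subseteq\mathcal{H}_\chi$ survives restriction to the label fibre $\{h(x)=y\}$ — this last point is the coverage intuition flagged after the statement. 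If one wants the multiclass version ($|O|>2$), the identical scheme works with $\delta(x\mid\chi)=p-\tfrac1p\sum_y\pi(\mathcal{H}_{\chi,x,y})^2$, using that $S^2-2a_1S+\sum_y a_y^2\ge(S-a_1)^2\ge 0$ makes this expression nondecreasing in each $\pi(\mathcal{H}_{\chi,x,y})$; alternatively one may invoke the general theorem of Golovin and Krause \cite{Golovin} that adaptive stochastic coverage problems are adaptive submodular, identifying each hypothesis with an element to be covered and each query–outcome pair $(x,y)$ with the set $\{h:h(x)\ne y\}$ it covers.
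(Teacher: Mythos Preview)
Your proof is correct and, in fact, more complete than what the paper itself provides. The paper does not prove this lemma in full: it offers only a one-line intuition for adaptive monotonicity (``querying a label can only remove hypotheses, and hypothesis probabilities are nonzero'') and for adaptive submodularity merely says the argument ``involves comparing the conditional expected marginal benefits,'' deferring the details to \cite{Golovin}. Your computation of the closed form $\delta(x\mid\chi)=2ab/(a+b)$ and the monotonicity of $g(a,b)=ab/(a+b)$ in each coordinate is precisely the argument from the original Golovin--Krause paper, so you are filling in exactly the content the survey omits. The multiclass remark and the identification with adaptive stochastic coverage are also accurate and match the source.
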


\noindent The proof of monotonicity follows along lines similar to the one used in \textbf{Theorem \ref{lma:ss}}: basically, querying a label can only remove hypotheses, and hypothesis probabilities are nonzero, so removing one can only reduce the value of $f$. The proof of submodularity is more subtle, though it rests on the same intuition as that of monotonicity and involves comparing the conditional expected marginal benefits, as described in. Interestingly, this angle yields a slightly more optimistic average case analysis than that given in \textbf{Theorem \ref{thm:dasgupta}} above, removing the constant multiplier from the upper bound. We give it below in \textbf{Theorem \ref{thm:golovin}}, adapted from \cite{Golovin}:\\

\begin{theorem}[Golovin and Krause ~\cite{Golovin}]
\label{thm:golovin}
Suppose the optimal query policy requires $M$ labels in expectation for target hypotheses chosen using distribution $\pi$ from hypothesis class $\mathcal{H}$. Then the expected number of labels queried by the greedy strategy is at most $M \left(\log \left(\frac{1}{\min_{h \in \mathcal{H}} \pi(h)}\right) + 1\right)$.
\end{theorem}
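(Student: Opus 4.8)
The plan is to derive Theorem~\ref{thm:golovin} as a corollary of the general performance guarantee for the adaptive greedy policy (Algorithm~\ref{alg:greedy_adap}) specialized to an adaptive stochastic \emph{minimum cost cover} problem, using the adaptive submodularity and adaptive monotonicity of the hypothesis-space-reduction objective $f$ established in Lemma~\ref{lma:gbs}. Concretely, I would first recast ``identify the target hypothesis'' as ``drive $f(\mathrm{dom}(\chi),\Phi)$ up to its maximal attainable value under the realized $\Phi$'': since each label query only deletes hypotheses from the version space $\mathcal{H}_\chi$ and never re-adds them, and $f$ increases by exactly the prior mass of the hypotheses deleted, running the greedy rule until no query can shrink $\mathcal{H}_\chi$ further is precisely the coverage formulation, with unit query cost so that ``cost'' counts labels. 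I would also do the short bookkeeping computation showing that the prior-weighted greedy rule $\arg\min_x \bigl|\sum_{h\in\mathcal{H}_\chi}\pi(h)h(x)\bigr|$ coincides with the $\arg\max_x \delta(x\mid\chi)$ of Algorithm~\ref{alg:greedy_adap} for this $f$, so that the abstract guarantee really applies to the concrete query policy.

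Then I would invoke the key structural lemma of Golovin and Krause: for an adaptive monotone, adaptive submodular objective, at every step the expected marginal benefit of the greedy choice is at least $1/M$ times the expected residual gap between the current value of $f$ and the coverage target, where $M$ is the expected cost of the optimal policy; this is the adaptive analogue of the classical greedy bound for submodular maximization, and it is where adaptive submodularity is used essentially, to ``graft'' the optimal policy's decision tree underneath an arbitrary greedy partial realization. Iterating this one-step inequality shows the expected residual gap contracts by a factor $(1-1/M)$ per query. The last ingredient is a granularity argument: take $\eta = \min_{h\in\mathcal{H}}\pi(h)$; since the surviving mass $\sum_{h\in\mathcal{H}_\chi}\pi(h)$ is either $0$ or at least $\eta$, once the residual gap falls below $\eta$ it must already be $0$, so ``coverage'' is reached after the gap has been reduced by a factor $Q/\eta \le 1/\min_h\pi(h)$ with $Q=1$. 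Counting the geometric steps needed for that reduction and taking expectations over realizations yields $M\bigl(\log(1/\min_h\pi(h))+1\bigr)$, which is exactly the claimed bound; note this sharpens Theorem~\ref{thm:dasgupta} by replacing its multiplicative constant $4$ (and the uniform prior, for which $\min_h\pi(h)=1/d$) with $1$.

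The main obstacle I expect is the ``self-certifying'' subtlety: the version space never becomes empty, so the coverage target is genuinely realization-dependent — $f$ tops out at $1-\pi(h^\ast)$ for the true hypothesis $h^\ast$, not at $1$. Handling this rigorously requires (i) defining the greedy policy's stopping condition in terms of the realized $\Phi$ (stop when $f(\mathrm{dom}(\chi),\Phi)=f(V,\Phi)$) rather than a fixed numerical threshold, and (ii) taking the residual-gap recursion in expectation over realizations consistent with the current partial realization so that the per-realization targets are averaged correctly. The clean resolution is to truncate $f$ at its realization-dependent maximum (equivalently, keep $Q=1$ and exploit that a gap strictly below $\eta$ forces a gap of $0$), after which the telescoping argument from the abstract theorem goes through verbatim; everything else in the proof is routine.
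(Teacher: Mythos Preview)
The paper does not supply its own proof of Theorem~\ref{thm:golovin}; it merely states the result and attributes it to Golovin and Krause~\cite{Golovin}, remarking only that the adaptive-submodularity viewpoint ``yields a slightly more optimistic average case analysis'' that ``remov[es] the constant multiplier from the upper bound'' of Theorem~\ref{thm:dasgupta}. There is therefore no in-paper argument to compare your proposal against.

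That said, your sketch is an accurate high-level reconstruction of the original Golovin--Krause argument from the cited reference: the recasting of hypothesis identification as adaptive stochastic minimum-cost cover, the one-step contraction lemma (the greedy action's expected marginal benefit is at least $1/M$ times the expected residual gap, proved by grafting the optimal policy's decision tree beneath the current partial realization), the geometric iteration of that inequality, and the granularity cutoff at $\eta=\min_{h\in\mathcal{H}}\pi(h)$ are precisely the ingredients used there. The self-certifying subtlety you flag---that the version space bottoms out at $\{h^\ast\}$ so the coverage target is realization-dependent---is genuine and is handled in the source essentially as you describe. So your proposal is sound and matches the cited proof, which is all the paper itself is invoking.
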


\subsubsection{New directions}

\noindent This is a wonderful example of cross fertilization between research in computer science theory and optimization and learning theory. Working on submodularity and adaptive submodularity, computer scientists were able to rediscover and generalize previously published results from machine learning, improving an upper bound along the way. More important, they provided new and useful insights into the problem, relating it to other problems (which we did not discuss in this section) and paving the way to new discoveries. Recently, there has been an explosion of similar work, much of it published in 2013. \cite{mirzasoleiman13distributed} describe a framework for performing distributed submodular maximization in a shared-nothing (MapReduce) storage setting and using it to choose a representative subsample of a massive data set for learning (similar to our selective sampling on a budget problem). \cite{chen13near} describe a greedy \textit{batch-mode} active learning algorithm that queries labels in batches of size $k > 1$ and show that this approach is competitive not only with optimal batch-more active learning but also with more traditional greedy active learning. There are a variety of other papers pushing the boundary in this area \cite{GolovinK11} \cite{golovin10near} \cite{zuluaga13active} \cite{hollinger:2011}.

There are two lines of work that seem conspicuously absent (at least, based on our admittedly myopic literature review): (1) applications of submodularity to \textbf{streaming active learning}; and (2) ``aggressive'' active learning in the nonrealizable case. The former involves active learning when we do \textit{not} have access to the entirety of $\mathcal{U}$ at the start of the learning process. Rather, we receive one data point at a time in an online fashion and must make a query decision for point $X_t$ based only on the samples $\mathcal{U}_t$ that we've seen so far. The above greedy algorithm and analysis require that we be able to choose a point $X_t = \arg\min_{X \in \mathcal{U} \setminus \mathcal{L}_t} \left| \sum_{h \in \mathcal{H}_{\mathcal{L}_t}} \pi(h) h(X) \right|$. We cannot, of course, do this in the streaming setting. Nonetheless, intuition suggests that we may be able to extend the adaptive submodularity framework (or some related idea) to this setting.

Aggressive active learning in the non-realizable case is a wide open problem, at least as of \cite{dasgupta2011}. Informally, aggressive active learners, which include greedy active learners, are those that attempt to make the ``most informative'' label query at each step. In the realizable case, it is possible to develop aggressive algorithms that are statistically consistent (will discover the optimal hypothesis with enough queries) and have sound theoretical guarantees for label complexity. However, these guarantees go out the window with realizability. Perhaps some form of submodularity may help here, but at first blush, it looks as though the nonrealizable case will not satisfy the assumptions necessary for adaptive submodularity. We \textit{do} have a variety of \textit{mellow} active learners, which seek \textit{any} informative label query, that are label efficient and statistically consistent \cite{iwalcal} \cite{dasgupta2011}. It would be interesting to develop a new analysis of these algorithms in terms of submodularity and then to see if this analysis perhaps provides a bridge between mellow and aggressive active learning.

\section{Submodularity in Weighted Constraint Reasoning}

  Many application require efficient representation and reasoning about factors like fuzziness, probabilities, preferences, and/or costs. Various extensions to the basic framework of Constraint Satisfaction Problems (CSPs) \cite{D:BOOK:03} have been introduced to incorporate and reason about such ``soft'' constraints. These include variants like \emph{fuzzy-CSPs}, \emph{probabilistic-CSPs}, and Weighted-CSPs (WCSPs). A WCSP is an \emph{optimization} version of a CSP in which the constraints are no longer ``hard,'' but are extended by associating (non-negative) \emph{costs} with the tuples. The goal is to find an assignment of values to all variables from their respective domains such that the total cost is \emph{minimized}.
  
  For simplicity, we restrict ourselves to Boolean WCSPs. Note that this class can be used to model important combinatorial problems such as representing and reasoning about user preferences \cite{BBDHP:JAIR:04}, over-subscription planning with goal preferences \cite{D:IJCAI:07}, combinatorial auctions \cite{S:AI:02}, and bioinformatics \cite{SGSST:AI:07}, energy minimization problems in probabilistic settings, computer vision, Markov Random Fields \cite{K:MSR:05}, etc. In addition, many real-world domains exhibit \emph{submodularity} in the cost structure that is worth exploiting for computational benefits. In what follows, we define a class of submodular constraints over Boolean domains and give a polynomial-time algorithm for solving instances from this class. 
  
\subsection{Weighted Constraint Satisfaction Problems}
  Formally, a WCSP is defined by a triplet $\langle \mathcal{X,D,C} \rangle$ where $\mathcal{X} = \{ X_1,X_2 \ldots X_N \}$ is a set of \emph{variables}, and $\mathcal{C} = \{ C_1,C_2 \ldots C_M \}$ is a set of \emph{weighted constraints} on subsets of the variables. Each variable $X_i$ is associated with a discrete-valued \emph{domain} $D_i \in \mathcal{D}$, and each constraint $C_i$ is defined on a certain subset $S_i \subseteq \mathcal{X}$ of the variables. $S_i$ is referred to as the \emph{scope} of $C_i$; and $C_i$ specifies a non-negative \emph{cost} for every possible combination of values to the variables in $S_i$. The \emph{arity} of the constraint $C_i$ is equal to $|S_i|$. An \emph{optimal} solution is an assignment of values to all variables (from their respective domains) so that the \emph{sum} of the costs (as specified locally by each weighted constraint) is \emph{minimized}. In a Boolean WCSP, the size of any variable's domain is $2$ (that is, $D_i = \{ 0,1 \}$ for all $i$). Boolean WCSPs are representationally as powerful as WCSPs; and it is well known that optimally solving Boolean WCSPs is NP-hard in general \cite{D:BOOK:03}. The \emph{constraint graph} associated with a WCSP instance is an undirected graph where a node represents a variable and an edge $(X_i,X_j)$ exists if and only if $X_i$ and $X_j$ appear together in some constraint.

\subsection{Submodular Constraints}
  Submodular constraints over Boolean domains correspond directly to \emph{submodular set functions}. A set function $\psi : 2^V \rightarrow \mathrm{Q}$ defined on all subsets of a set $V$ is \emph{submodular} if and only if, for all subsets $S, T \subseteq V$, we have $\psi(S \cup T) + \psi(S \cap T) \leq \psi(S) + \psi(T)$. A submodular constraint is a weighted constraint with a submodular cost function. Here, the correspondence is in light of the observation that any subset $S$ can be interpreted as specifying the Boolean variables in $V$ that are set to $1$. Boolean WCSPs with submodular constraints are known to be tractable \cite{ZSH:CSC:10}. However, the general algorithm for solving Boolean WCSPs with submodular constraints has a time complexity of $O(N^6)$, which is not very practical. Specific classes of submodular constraints have been shown to be related to graph cuts, and are therefore solvable more efficiently \cite{ZSH:CSC:10}.

\subsection{Lifted Graphical Representations for Weighted Constraints}
  \emph{Constraint Composite Graphs} (CCGs) are combinatorial structures associated with optimization problems posed as WCSPs. They provide a unifying framework for exploiting both the \emph{graphical} structure of the variable interactions as well as the \emph{numerical} structure of the weighted constraints \cite{K:CP:08}. We reformulate WCSPs as \emph{minimum weighted vertex cover} problems to construct simple bipartite graph representations for important classes of submodular constraints, thereby translating them into max-flow problems on bipartite graphs.

  The concept of the minimum weighted VC\footnote{A \emph{vertex cover} (VC) is a set of nodes $S$ such that every edge has at least one end point in $S$.} on a given undirected graph $G=\langle V,E \rangle$ can be \emph{extended} to the notion of projecting minimum weighted VCs onto a given IS\footnote{$U$ is an \emph{independent set} (IS) of a graph if and only if no two nodes in $U$ are connected by an edge.} $U \subseteq V$. The input to such a projection is the graph $G$ as well as an identified IS $U=\{u_1, u_2 \ldots u_k\}$. The output is a \emph{table} of $2^k$ numbers. Each entry in this table corresponds to a $k$-bit vector. We say that a $k$-bit vector imposes the following restrictions: if the $i^{th}$ bit is $0$ ($1$), the node $u_i$ is necessarily excluded (included) from the minimum weighted VC. The value of an entry is the weight of the minimum weighted VC \emph{conditioned} on the restrictions imposed by it. Figure \ref{VC_example} presents a simple example.
  
\begin{figure}[t]
  \centering
  \includegraphics[width=0.35\textwidth]{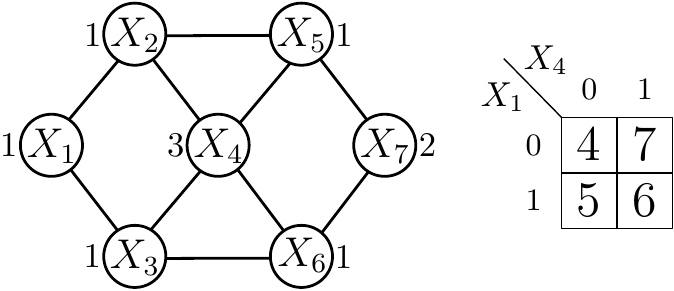}
  \caption{The table on the right-hand side represents the projection of the minimum weighted VC problem onto the IS $\{ X_1, X_4 \}$ of the node-weighted undirected graph on the left-hand side. (The weights on $X_4$ and $X_7$ are set to $3$ and $2$, respectively, while all other nodes have unit weights.) The entry `$7$' in the cell $(X_1=0, X_4=1)$, for example, indicates that, when $X_1$ is prohibited from being in the minimum weighted VC but $X_4$ is necessarily included in it, then the weight of the minimum weighted VC - $\{ X_2, X_3, X_4, X_7 \}$ or $\{ X_2, X_3, X_4, X_5, X_6 \}$ - is $7$.}
  \label{VC_example}
\end{figure}

  The aformentioned table can be viewed as a weighted constraint over $|U|$ Boolean variables. Conversely, given a (Boolean) weighted constraint, we can think about designing a ``lifted'' representation for it so as to be able to view it as the projection of a minimum weighted VC problem in some node-weighted undirected graph. This idea was first discussed in \cite{K:ISAIM:08}. The benefit of constructing these graphical representations for individual constraints lies in the fact that the ``lifted'' graphical representation for the entire WCSP can be obtained simply by ``merging\footnote{nodes that represent the same variable are simply ``merged'' - along with their edges - and every ``composite'' node is given a weight equal to the sum of the individual weights of the merged nodes.}'' them. This ``merged'' graph is referred to as the CCG associated with the WCSP. Computing the minimum weighted VC for the CCG yields a solution for the WCSP; namely, if $X_i$ is in the minimum weighted VC, then it is assigned the value $1$ in the WCSP, else it is assigned the value $0$ in the WCSP. Figure \ref{VC_example} shows an example WCSP and its CCG.
  
\begin{figure}[t]
  \centering
  \includegraphics[width=0.9\textwidth]{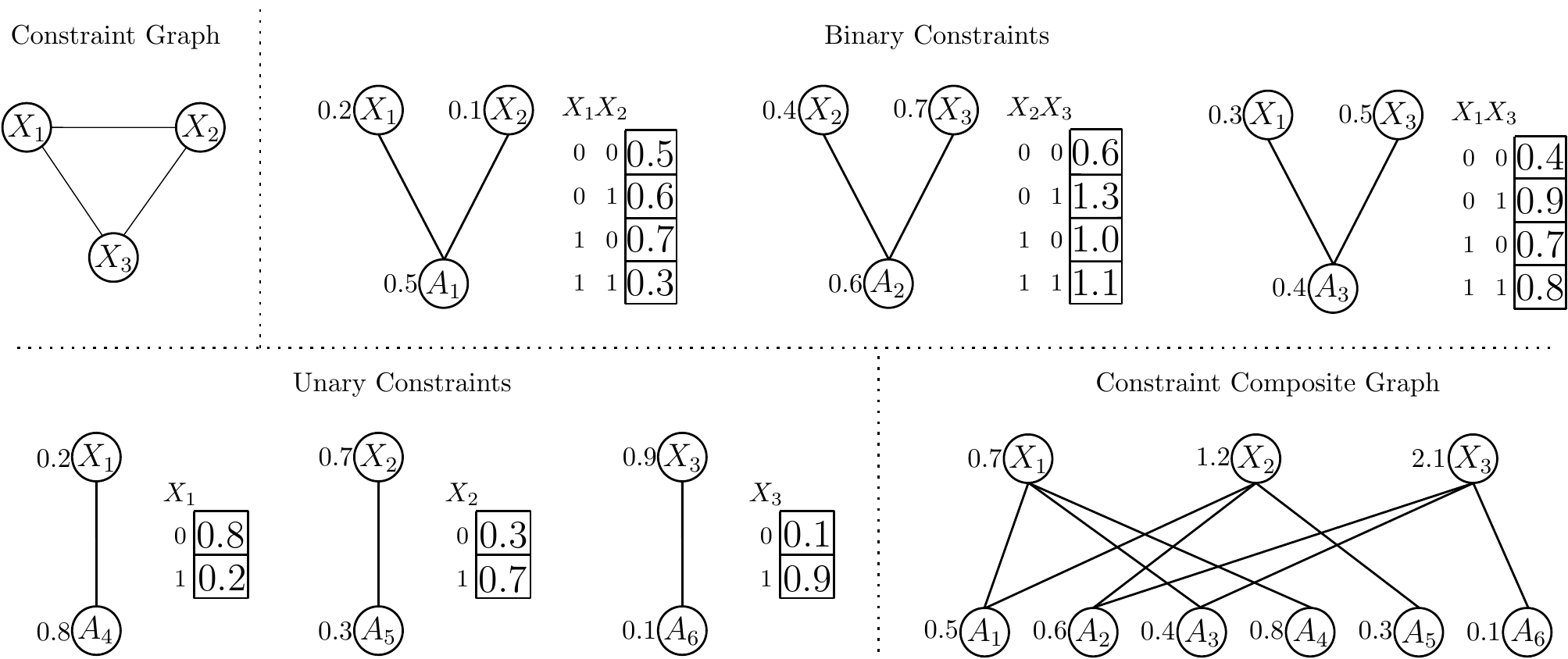}
  \caption{Shows a WCSP over $3$ Boolean variables. The constraint network is shown in the top-left cell, and the $6$ binary and unary weighted constraints are shown along with their lifted graphical representations in the $1^{st}$ and $2^{nd}$ rows. The CCG is shown in the bottom-right cell.}
  \label{ccg_example}
\end{figure}

  Any given weighted constraint on Boolean variables can be represented graphically using a \emph{tripartite} graph, which can be constructed in polynomial time \cite{K:CP:08}. In many cases, the lifted graphical representations even turn out to be only \emph{bipartite}. Since the resulting CCG is also bipartite if each of the individual graphical representations are bipartite, the tractability of the language $\mathcal{L}_{bipartite}^{Boolean}$ - the language of all Boolean weighted constraints with a bipartite graphical representation - is readily established. This is because solving minimum weighted VC problems on bipartite graphs is reducible to max-flow problems, and can therefore be solved efficiently in polynomial time.
  
  Finally, Boolean weighted constraints can be represented as multivariate polynomials on the variables participating in that constraint \cite{ZSH:CSC:10,K:CP:08}. The coefficients of the polynomial can be computed with a standard Gaussian Elimination procedure for solving systems of linear equations. The linear equations themselves arise from substituting different combinations of values to the variables, and equating them to the corresponding entries in the weighted constraint. One way to build the CCG of a given weighted constraint is: (a) build the graphical representations for each of the individual terms in the multivariate polynomial; and (b) ``merge'' these graphical representations \cite{K:CP:08}.

\subsection{Submodular Constraints with bounded arity}
  The focus of \cite{KCK:SARA:13} is on bounded arity submodular constraints (that is, submodular constraints with arity at most $K$, for some constant $K$) and providing asymptotically improved algorithms for solving them. The reason these submodular constraints can be solved more efficiently is because the underlying max-flow problems are staged on bipartite graphs. For Boolean WCSPs with arity at most $K$, the bipartite CCG has $N$ nodes in one partition, at most $2^K M$ nodes in the other partition, and at most $K 2^K M$ edges. For $K$ bounded by a constant, this results in a time complexity of $O(N M \log M)$. This significantly improves on the $O((N+M)^{3})$ time complexity of the algorithm provided by \cite{ZSH:CSC:10}.\footnote{For arity $K$, $M$ could be as large as $N \choose K$.} Figure \ref{fig_lifted_rep} shows the lifted graphical representation for all possible terms of a constraint of arity $3$.
  
\begin{figure}[t]
\includegraphics[width=\textwidth]{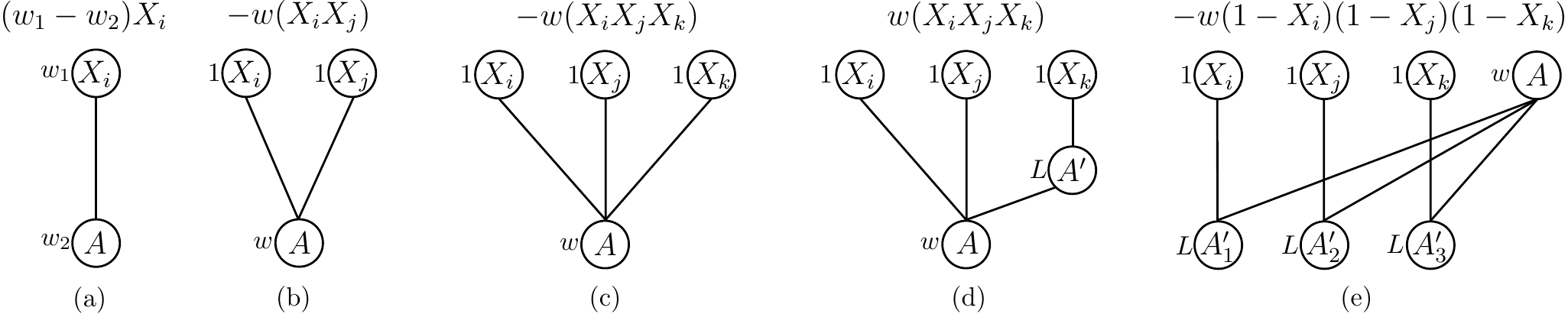}
\caption{Lifted graphical representations for different kind of terms. (a) represents a linear term, either positive or negative, where $w_1$ and $w_2$ are chosen appropriately. (b) represents a negative quadratic term. (c) represents a negative cubic term. (d) illustrates the ``change of variable'' method and essentially represents a leading positive cubic term. (e) is the same ``flower'' structure shown in (c), but with a ``thorn'' introduced for each variable. The resulting graph is also bipartite, because the auxiliary variable $A$ can be moved to the same partition as the original variables. The graph now represents the term $-w (1-X_i)(1-X_j)(1-X_k)$, which in effect, is a bipartite representation for an expression with a leading positive cubic term.}
\label{fig_lifted_rep}
\end{figure}

\subsection{Social Influence}
With the increasing popularity of online social network websites and apps, such as Facebook and Twitter, social networks now play a fundamental role as medium for people to share, exchange, and obtain new ideas and information. Modeling, utilizing, and understanding \textbf{social influence} has become ``hot topic'' in computer science, machine learning, and computational social science \cite{GJA10,KKT03,GB12,Bharathi:2007,Borodin:2010,chen2011influence,he2012influence,Budak:2011}. It turns out that submodular functions and especially submodular function maximization play fundamental roles in solving algorithmic questions associated with social influence. In this section, we mainly focus on using submodular function maximization techniques to solve two problems related to social influence, namely \textit{influence maximization}~\cite{KKT03} and \textit{network inference}~\cite{GJA10}.

\subsubsection{Influence Maximization}
Assume that now a company wants to promote its new product among the individuals in a social network (real or virtual). The company has a limited budget to give free samples of their new product to the users in the social network. A natural question to ask is to which set of users should the company give the free sample products, such that the overall adoption of the new product can be maximized. The question is exactly the \textbf{influence maximization} problem, namely selecting a small set of seed nodes in a social network such that its overall influence coverage is maximized under a certain diffusion model.

Among the many diffusion models, the \textbf{Independent Cascade} (IC) model and \textbf{Linear Threshold} (LT) are used widely in the study of influence maximization~\cite{KKT03}. Both IC and LT models are stochastic models characterizing how influence propagates throughout the network, starting from the initial seed notes.

%

For influence maximization, the objective function $\sigma(S)$, where $S$ is the initial seed set, is the expected number of activated nodes under the diffusion model. The problem is simply to maximize $\sigma(S)$ subject to the cardinality constraint $|S|\leq k$.

It has been shown that the influence maximization problem under both IC model and LT model is NP-hard~\cite{KKT03}. However, by the following theorems the problem allows efficient approximation algorithm.
 \begin{theorem}[Kempe et al.~\cite{KKT03}]\label{Thm:KKT}
The objective function of influence maximization problem under both IC and LT model is non-negative, monotone and submodular.
\end{theorem}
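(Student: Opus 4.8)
The plan is to reduce both the IC and LT models to an equivalent \emph{live-edge} (random subgraph) formulation, observe that for any fixed realization the influence function becomes a coverage function — which is manifestly non-negative, monotone, and submodular — and then lift these three properties back to $\sigma$ using the fact that $\sigma$ is a finite non-negative combination over realizations. The closedness property of submodular functions under non-negative linear combinations (Section 2.2) does the lifting, and the analogous (easy) statements hold for monotonicity and non-negativity.

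First, for the \textbf{Independent Cascade} model, I would introduce the live-edge graph $G$: independently for each directed edge $(u,v)$, declare it \emph{live} with probability $p_{uv}$ and \emph{blocked} otherwise. Using a deferred-decisions argument — the order in which the edge coin flips are revealed does not change the eventual active set, since an edge $(u,v)$ only ever matters at the single moment $u$ first becomes active — I would argue that the set of nodes activated by the cascade from seed set $S$ is precisely $R_G(S)$, the set of nodes reachable from $S$ in $G$. Hence $\sigma(S) = \mathbb{E}_G[\,|R_G(S)|\,] = \sum_g \Pr[G=g]\,|R_g(S)|$, a sum over the finitely many subgraphs $g$ with non-negative coefficients.

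Second, for the \textbf{Linear Threshold} model I would establish the analogous equivalence, which is the delicate step. Construct $G$ by having each node $v$ independently select \emph{at most one} incoming edge: it picks $(u,v)$ with probability $b_{uv}$ and selects no incoming edge with probability $1-\sum_u b_{uv}$. The claim is that the distribution of the final active set produced by the threshold dynamics (with i.i.d.\ thresholds $\theta_v\sim U[0,1]$) equals the distribution of $R_G(S)$. I would prove this by induction on the propagation rounds, exploiting the memorylessness of the uniform threshold: conditioned on $v$ not yet being active after a set $A$ of its in-neighbors has activated, the conditional probability that $v$ activates when one further in-neighbor $w$ turns active is $b_{wv}/(1-\sum_{u\in A} b_{uv})$, which is exactly the probability that the live-edge process assigns $v$'s chosen edge to $w$ given that chosen edge was not among $A$. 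Matching these conditional probabilities round by round gives the equality in distribution, so again $\sigma(S)=\sum_g \Pr[G=g]\,|R_g(S)|$.

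Third, for a \emph{fixed} realization $g$ I would check the three properties of $S\mapsto |R_g(S)|$. Non-negativity is immediate. Monotonicity holds because $S\subseteq T$ implies $R_g(S)\subseteq R_g(T)$. Submodularity follows by writing $R_g(S)=\bigcup_{v\in S} R_g(\{v\})$ and noting this is a coverage function: for $S\subseteq T$ and $w\notin T$, the marginal gain $|R_g(\{w\})\setminus R_g(S)|\ge |R_g(\{w\})\setminus R_g(T)|$ since $R_g(S)\subseteq R_g(T)$. Applying the closedness property to the convex combination $\sigma=\sum_g \Pr[G=g]\,|R_g(\cdot)|$ (and the trivial analogues for monotonicity and non-negativity) yields the theorem. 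The main obstacle is the LT equivalence in the second step; the IC case and the coverage-function argument are routine, but aligning the threshold dynamics with the single-in-edge random subgraph requires the careful conditional-probability induction sketched above.
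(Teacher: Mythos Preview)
Your proposal is correct and follows essentially the same approach as the paper: write $\sigma(S)$ as a non-negative combination $\sum_X \Pr[X]\,\sigma(S\mid X)$ over realizations of the diffusion process, argue via reachability that each $\sigma(\cdot\mid X)$ is submodular, and invoke the closedness of submodular functions under conic combinations. The paper's sketch leaves the ``reachability argument'' as a single phrase, whereas you supply the full live-edge constructions for IC and LT and the coverage-function check; in particular, your careful conditional-probability induction for the LT equivalence is precisely the substance that the paper suppresses, so your write-up is a faithful expansion of the same proof rather than a different route.
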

By the classic greedy algorithm for monotone submodular function maximization, a $1-1/e$ approximation guarantee can be achieved. The proof of the theorem is by the fact that conic combination of submodular functions is also submodular. The objective function is a expectation and can be written as
$$
\sigma(S) = \sum_{\text{outcome }X}Prob[X]\sigma(S|X),
$$
where $X$ is any realization of the stochastic diffusion process. A reachability argument for both IC model and LT model can be used to show that $\sigma(S|X)$ is submodular under any $X$. 

Though the greedy algorithm can solve the problem approximately in polynomial time, its key step, i.e., evaluation of the marginal gain $\sigma(S\cup\{u\})-\sigma(S)$, can take a long time for large networks. Many papers have been published on how to improve the efficiency of the algorithm~\cite{Chen:2010,Leskovec:2007:COD:1281192.1281239,Goyal:2011:COG:1963192.1963217,DBLP:conf/icdm/ChenYZ10,Chen:2010:SIM:1835804.1835934}, such as by lazy evaluation~\cite{Leskovec:2007:COD:1281192.1281239,Goyal:2011:COG:1963192.1963217}, or by approximate evaluation of the marginal gain~\cite{Chen:2010,DBLP:conf/icdm/ChenYZ10,Chen:2010:SIM:1835804.1835934} .

A more general result on \textbf{Generalized Linear Threshold} models has been proved generalizing the results for the IC and LT models in~\cite{Mossel2007}. The proof uses a sophisticated stage-wise coupling argument to show that submodularity applies. The idea of the proof is to add the initial seeds and propagate the influence stage by stage. The key component in the proof is the anti-sense coupling used in the last stage.

A extension to influence maximization that draws much attention recently is to solve this problem under the competitive influence. Competitive influence implies that two or multiple competitive products, or ideas are propagating simultaneously in the social network. The influence maximization problem naturally extends to maximizing one's own influence~\cite{Bharathi:2007,Borodin:2010,chen2011influence} or
minimizing the influence of the competitors~\cite{he2012influence,Budak:2011} given the choices of the initial seeds of the competitors. For example, on the maximization side,~\cite{ChenCCKLRSWWY11} study the influence maximization when a user can dislike the product and propagate bad news about it. On the minimization side, ~\cite{HSCJ2012} study the idea of \textit{influence blocking maximization}, which focuses on selecting seeds to block the propagation of rumors. Both approaches solve the optimization problem by showing the objective function is monotone and submodular. The proof technique is similar to that in~\cite{KKT03}. However, their arguments are much more complicated due to the interaction of competitive diffusion.

\subsubsection{Network Inference}
The influence maximization problem takes as its input a social network structure with the strength of influence on each edge. However, in most cases, the underlying network that enables the diffusion is hidden (e.g., networks on who influenced whom). The most common observations of information diffusion are only the activation time for individual in social network (e.g., the time stamps of when a person posted a blog containing certain information or retweeted another user's tweet, when a user bought a certain product in viral marketing applications, etc.). The \textbf{network inference problem} focuses on discovering the diffusion network from the observed cascades occurring among the individuals in a social network. Existing approaches to this problem solve a maximum likelihood estimation problem with respect to the network structure under certain diffusion models~\cite{GJA10,GB12,GDB11,SJ10}. It turns out that the likelihood function of this problem can be approximated with a submodular function. Thus, submodular function maximization can be used to solve this problem~\cite{GJA10,GB12}.

The extended IC model is used as the diffusion model in~\cite{GJA10,GB12}. In the extended IC model, each edge is associated with an activation probability. Moreover, each activation has time delay. For example, if node $v$ is activated at time $t_v$ and the activation attempt to $v$'s neighbour $u$ succeeds. Then $u$ with become activated at time $t_u+\Delta t$, where $\Delta t$ is the time delay. In the model, the delay time satisfies exponential distribution or power law distribution, namely
$$
P_d(\Delta t)\propto e^{-\frac{\Delta t}{\alpha}} \text{\ or\ } P_d(\Delta t)\propto\frac{1}{\Delta t^\alpha}
$$

Then according to the model, if $v$ which is activated at time $t_v$ and $v$ succeeds in activating node $u$ which becomes activated at time $t_u$. Then the probability this activation occurs is:
$$
P_c(v,u)=P_d(t_u-t_v)p_{vu}
$$
Also the model assumes that influence can only propagate forward in time, which means $
P_c(v,u)=0 \text{\ if\ }\ t_v>t_u.
$
Then if the pattern of cascade $c$ forms a tree $T$, the probability that the cascade is observed given the tree is
$$
P(c|T)=\prod_{(i,j)\in T}P_c(i,j)
$$
In addition, if we assume the who-infect-who relation forms a tree pattern (one is only activated by one person). Then given a certain $G$, the probability we observe the cascade $c$ would be
$$
P(c|G)=\sum_{T\in T(G)}P(c|T)P(T|G)\propto \sum_{T\in T(G)}\prod_{(i,j)\in T}P_c(i,j)
$$
where $T(G)$ is all directed spanning tree on $G$.
Therefore, if we have observed a set of cascades $C=\{c_1,c_2,\ldots\}$,
The probability of observing all these cascades is
$$
P(C|G)=\prod_{c\in C}P(c|G).
$$
Under this configuration, the network inference problem is to find an graph $\hat{G}=(V,\hat{E})$ with less than $k$ edges such that
$$
\hat{G} = \arg\max_{|E|\leq k} P(C|G)
$$
In the objective function, we sum over all spanning trees of the graph $G$, which is super-exponential. In order to make this computation feasible, we instead solve an approximation in which only the spanning tree with the maximal likelihood is considered, namely
$$
P(C|G)=\prod_{c\in C}\max_{T\in T(G)}P(c|T)=\prod_{c\in C}\max_{T\in T(G)}\prod_{(i,j)\in T}P_c(i,j)
$$
Then we define $F_c(G)$ as the difference between the log likelihood of cascade $c$ over graph $G$ and empty graph $\bar{K}$.
$$
F_c(G)=\max_{T\in T(G)}\log P(c|T)-\max_{T \in T(\bar{K})}\log P(c|T)
$$
and take sum over all the cascades, we have
$$
F_C(G)=\sum_{c\in C}F_c(G)
$$
$\bar{K}$ is a graph with all the nodes in $G$ and also a extra node $m$. The only edges in $\bar{K}$ are the edges from $m$ to every other nodes with activation probability $\varepsilon$ and delay $0$. The extra node represents the external influence. Then the optimization problem can be rewritten as
\begin{equation}\label{Equ:SubObj}
G^*=\arg\max_{|E|\leq k}F_C(G)
\end{equation}

\cite{GJA10} proved that this objective function is monotone and submodular. Therefore, a greedy algorithm can achieve $1-\frac{1}{e}$ approximation for solving it. This approach was later improved by by the \textit{MultiTree algorithm} in ~\cite{GB12}, where the the matrix tree theorem is used to calculated the exact summation over all possible spanning trees, rather than using the maximum spanning tree approximation.

In a on-going project by Xinran He with Prof. Yan Liu, we are experimenting with using \textit{Maximum a Posteriori} inference to solve the network inference problem. The previous approaches assume no prior knowledge about the structure of the inferred graph. However, it has been shown repeatedly that social networks have many unique properties, including heavy-tail degree distribution, small diameter, community structure, and so on. We propose a social network generative model over the diffusion network as a prior to incorporate the prior knowledge about network structure. Our current choice is the Kronecker graphs model~\cite{JDJCZ10,KJ11}. The Kronecker graphs model is a parametric model which can provide a probability for the existence of each edge in the social network. The existence of each edge is considered independent under this model. Using this model as a prior, we can change the objective function $F_C(G)$ in Equation~\ref{Equ:SubObj} to

\[
F'_C(G) = F_C(G) + \sum_{e\in E}(\log Prob[e\ \text{exists}]-\log Prob[e\ \text{not exists}]).
\]

After adding a modular function to a submodular function, the resulted $F'_C(G)$ is still submodular, however it may not necessary be monotone any more. As a result, the simple greedy algorithm can not be used to solve this problem. Instead, we can use the algorithm proposed in~\cite{FeldmanNS11} with a $1/2+o(1)$ approximation guarantee.

\nocite{*}
\bibliographystyle{./IEEEtran}
\bibliography{bibliography}
\end{document}